\relax
\documentclass[letterpaper]{article}
\usepackage{aaai18}
\usepackage{times}
\usepackage{helvet}
\usepackage{courier}

\usepackage{epsfig}
\usepackage{amsthm}
\usepackage{amssymb}
\usepackage{amsmath}
\usepackage{amsfonts}
\usepackage{ifthen}
	
\usepackage{graphicx}
\usepackage{caption}
\usepackage{subcaption}
\usepackage{epstopdf}
\usepackage{wrapfig}
\usepackage{bm}
\usepackage{booktabs}       
\usepackage{nicefrac}       
\usepackage{array}			
\usepackage{url}

\newcommand{\argmax}{ \operatorname*{arg\,max}}

\newtheorem{theorem}{Theorem}

\newtheorem{lemma}{Lemma}

\newtheorem{definition}{Definition}
\newtheorem{assumption}{Assumption}

\newcommand{\set}[1]{{\mathcal #1}}

\frenchspacing
\setlength{\pdfpagewidth}{8.5in}
\setlength{\pdfpageheight}{11in}

\def\myproof{1} 
\pdfinfo{
/Title (Decentralized High-Dimensional Bayesian Optimization with Factor Graphs)
/Author (Trong Nghia Hoang, Quang Minh Hoang, Ruofei Ouyang, Kian Hsiang Low)}
\setcounter{secnumdepth}{2}  
\if\myproof1\nocopyright\fi

 \begin{document}
%
\title{Decentralized High-Dimensional Bayesian Optimization with Factor Graphs}
\author{
Trong Nghia Hoang$^{\S}$ \and Quang Minh Hoang$^{\dag}$ \and Ruofei Ouyang$^{\dag}$ \and Kian Hsiang Low$^{\dag}$\\
Laboratory of Information and Decision Systems, Massachusetts Institute of Technology, USA$^{\S}$\\ 
Department of Computer Science, National University of Singapore, Republic of Singapore$^{\dag}$\\
nghiaht@mit.edu$^{\S}$, \{hqminh, ouyang, lowkh\}@comp.nus.edu.sg$^{\dag}$ 
}

\maketitle

\begin{abstract}
This paper presents a novel \emph{decentralized high-dimensional Bayesian optimization} (DEC-HBO) algorithm that, in contrast to existing HBO algorithms, can exploit the interdependent effects of various input components on the output of the unknown objective function $f$ for boosting the BO performance and still preserve scalability in the number of input dimensions without requiring prior knowledge or the existence of a low (effective) dimension of the input space.
To realize this, we propose a sparse yet rich factor graph representation of $f$ to be exploited for designing an acquisition function that can be similarly represented by a sparse factor graph and hence be efficiently optimized in a decentralized manner using distributed message passing.
Despite richly characterizing the interdependent effects of the input components on the output of $f$ with a factor graph, DEC-HBO can still guarantee no-regret performance asymptotically.
Empirical evaluation on synthetic and real-world experiments (e.g., sparse Gaussian process model with $1811$ hyperparameters) shows that DEC-HBO outperforms the state-of-the-art HBO algorithms.
\end{abstract}

\section{Introduction}
\label{intro}
Many real-world applications/tasks often involve optimizing an unknown objective function $f$ given a limited budget of costly function evaluations. Examples of such applications/tasks include 
automatic hyperparameter tuning for machine learning models (e.g., deep neural network) \cite{Yamins13,Snoek12} and parameter configuration for robotic control strategies. 
Whereas gradient-based methods fail to optimize a function without an analytic form/derivative,
 \emph{Bayesian optimization} (BO) has established itself as a highly effective alternative. 
In particular, a BO algorithm maintains a \emph{Gaussian process} (GP) belief of the unknown objective function $f$ and alternates between selecting an input query to evaluate $f$ and using 
its observed output 
to update the GP belief of $f$ until the budget is exhausted. Every input query is selected by maximizing an acquisition function that is constructed from the GP belief of $f$.
Intuitively, such an acquisition function has to trade off between optimizing $f$ based on its current GP belief (exploitation) vs. improving its GP belief (exploration). Popular choices include improvement-based~\cite{shahriari16},  
information-based \cite{Hennig12},
and upper confidence bound~\cite{Srinivas10}.

While BO has demonstrated to be an effective optimization strategy in general, it has mostly found success in the context of input spaces with low (effective) dimension	 \cite{Djolonga13,Wang13}. 
However, several real-world application domains such as computer vision \cite{Yamins13}, networking \cite{Hornby06}, and computational biology \cite{Gonzalez14} often require optimizing an objective function $f$ over a high-dimensional input space without 
knowing if  its low (effective) dimension even exists.
This poses a grand challenge to the above conventional BO algorithms as the cost of maximizing an acquisition function grows exponentially with the number of input dimensions. To sidestep this issue, an extreme approach is to assume the effects of all input components on the output of $f$ to be pairwise independent~\cite{Kandasamy15,Wang17b,Zi17} 
(in the case of~\citeauthor{Li16}~\shortcite{Li16}, after some affine projection of the input space).
The acquisition function can then be maximized along each (projected) dimension separately, thus reducing its cost to linear in the number of input dimensions. Despite its simplicity, such a decoupling assumption can severely compromise the BO performance since it rarely holds in practice: 
The effects of different input components on the output of $f$ are usually interdependent~\cite{Klemen12}.
%
%
%
In this paper, we argue and show that this highly restrictive assumption to gain scalability is an overkill: It is in fact possible to achieve the same scalability with a strong performance guarantee while still taking into account the interdependent effects of various input components on the output of $f$. 

To achieve this, we first observe that the interdependent effects of many input components on the output of $f$ tend to be indirect: The effect of one input component (on the output of $f$) can only directly influence that of some components in its immediate ``neighborhood'', which in turn may influence that of other components in the same manner. For example, in a multi-project company, the poor performance of one employee in a collaborative project only indirectly affects the performance of another employee in another project through those who work on both projects. This is also the case for many parameter tuning tasks with additive loss where different, overlapping subsets of parameters contribute to different additive factors of the loss function \cite{Krahenbuhl11}.
The key challenge thus lies in investigating how the unknown objective function $f$ can be succinctly modeled to characterize such observed interdependent effects of the input components (on the output of $f$) and then exploited to design an acquisition function that can still be optimized scalably and yield a provable performance guarantee.

To address this challenge, this paper presents a novel \emph{decentralized high-dimensional BO} (DEC-HBO) algorithm (Section~\ref{problem}) that, in contrast to some HBO algorithms~\cite{Kandasamy15,Wang17b,Zi17}, can exploit the interdependent effects of various input components (on the output of $f$) for boosting the BO performance and, perhaps surprisingly, still preserve scalability in the number of input dimensions without requiring 
the existence of a low (effective) dimension of the input space, unlike the other HBO algorithms~\cite{Djolonga13,Li16,Wang13}.
To realize this, we propose a sparse yet rich and highly expressive factor graph representation of the unknown objective function $f$ by decomposing it into a linear sum of random factor functions, each of which involves only a small, \emph{possibly overlapping} subset of input components and is assumed to be distributed by an independent GP prior (Section~\ref{anova}). 
As a result, the input components of the same factor function have direct interdependent effects on the output of $f$, while the input components that are distinct between any two factor functions have indirect interdependent effects on the output of $f$ via their common input components;
the latter is predominant due to sparsity of the factor graph representing $f$.
%
We in turn exploit such a factor graph representation of $f$ to design an acquisition function that, interestingly, can be similarly represented by a sparse factor graph (Section~\ref{acquisition}) and hence be efficiently optimized (Section~\ref{dBO}) in a decentralized manner using a class of distributed message passing algorithms.
The main novel contribution of our work here is to show that despite richly characterizing the interdependent effects of the input components on the output of $f$ with a factor graph,
our DEC-HBO algorithm not only preserves the scalability in the number of input dimensions
but also guarantees the same trademark (asymptotic) no-regret performance. 
We empirically demonstrate the performance of DEC-HBO with synthetic and real-world experiments (e.g., sparse Gaussian process model with $1811$ hyperparameters) (Section~\ref{exp}).
\section{Background and Notations}
\label{background}
This section first describes the zeroth-order optimization problem and its asymptotic optimality criterion which lay the groundwork for BO. Then, we review a class of well-studied BO algorithms~\cite{Kandasamy15,Srinivas10} and highlight their practical limitations when applied to high-dimensional optimization problems. 
We will discuss later in Sections~\ref{problem} and~\ref{analysis} how our proposed DEC-HBO algorithm overcomes these limitations.
%
\subsection{Zeroth-Order Optimization}
\label{bo}
Consider the problem of sequentially optimizing an unknown objective function $f: \set{D} \rightarrow \mathbb{R}$ over a compact input domain $\mathcal{D} \subseteq \mathbb{R}^d$:
In each iteration $t=1,\ldots,n$, an input query $\mathbf{x}_t\in\mathcal{D}$ is selected for evaluating $f$ to yield a noisy observed output $y_t \triangleq f(\mathbf{x}_t) + \epsilon$ with i.i.d. Gaussian noise $\epsilon\sim\mathcal{N}(0, \sigma_n^2)$ and noise variance $\sigma_n^2$. 
Since every evaluation of $f$ is costly (Section~\ref{intro}), our goal is to strategically select input queries to approach the global maximizer ${\mathbf{x}}_\ast \triangleq \arg\max_{{\mathbf{x}} \in \set{D}} f({\mathbf{x}})$ as rapidly as possible.
This can be achieved by minimizing a standard BO objective such as the \emph{cumulative regret} $R_n$
which sums the \emph{instantaneous regret} $r_t \triangleq f(\mathbf{x}_\ast) - f(\mathbf{x}_t)$ incurred by selecting the input query $\mathbf{x}_t$ (instead of $\mathbf{x}_\ast$ due to not knowing $\mathbf{x}_\ast$ beforehand) to evaluate $f$ over iteration $t=1,\ldots,n$,
that is, $R_n \triangleq \sum^n_{t=1} r_t$.
A BO algorithm is said to be \emph{asymptotically optimal} if it satisfies $\mathrm{lim}_{n\rightarrow\infty} R_n/n = 0$ which implies $\mathrm{lim}_{n\rightarrow\infty}(f(\mathbf{x}_\ast) - \max_{t=1}^n f(\mathbf{x}_t)) = 0$, thus guaranteeing \emph{no-regret} performance asymptotically.
\subsection{Bayesian Optimization with No Regret}
\label{gpucb}
A notable asymptotically optimal BO algorithm selects, in each iteration $t+1$, an input query $\mathbf{x} \in \mathcal{D}$ 
to maximize an acquisition function called the \emph{Gaussian process upper confidence bound} (GP-UCB) \cite{Srinivas10} 
that trades off between observing an expected maximum (i.e., with large GP posterior mean $\mu_t(\mathbf{x})$) given the current GP belief of $f$ (i.e., exploitation) vs. that of high predictive uncertainty (i.e., with large GP posterior variance $\sigma_t(\mathbf{x})^2$) to
improve the GP belief of $f$ over $\set{D}$ (i.e., exploration), that is, 
$
\mathbf{x}_{t+1} \triangleq \argmax_{\mathbf{x} \in \mathcal{D}} \mu_t(\mathbf{x}) + {\beta^{1/2}_{t+1}}\sigma_t(\mathbf{x})
$
where the parameter $\beta_{t+1} > 0$ is set to trade off between exploitation vs. exploration for 
guaranteeing no-regret performance asymptotically with high probability
and the GP posterior mean $\mu_t(\mathbf{x})$ and variance $\sigma_t(\mathbf{x})^2$ will be defined later in a similar manner to~\eqref{eq:9} (Section~\ref{acquisition}) to ease exposition.

Unfortunately, the GP-UCB algorithm does not scale well to high-dimensional optimization problems as its cost grows exponentially with the number of input dimensions. 
This prohibits its use in real-world application domains that require optimizing an objective function over a high-dimensional input space such as those mentioned in Section~\ref{intro}.
To sidestep this issue, some HBO algorithms~\cite{Djolonga13,Li16,Wang13} assume the existence of a low-dimensional embedding of the input space which then allows them to operate in an exponentially smaller surrogate space and hence reduce their cost. 
But, these HBO algorithms impose strong assumptions (including prior knowledge of the   dimension of the embedding) to guarantee that the global maximizer (or its affine projection) indeed lies within the surrogate space. In particular, one such precarious assumption is that the dimensionality of the low-rank surrogate space reflects the actual effective dimension of the input space.

A more practical alternative is to consider the effects of various input components on the output of $f$ instead:
The HBO algorithm of~\citeauthor{Kandasamy15}~\shortcite{Kandasamy15} assumes the unknown objective function $f$ to be decomposable into a sum of independent, GP-distributed local functions $f_1,\ldots, f_d$, each of which involves only a single input dimension: $f(\mathbf{x}) \triangleq f_1(\mathbf{x}^{(1)}) + \ldots + f_d(\mathbf{x}^{(d)})$ where $\mathbf{x}^{(i)}$ and $d$ denote component $i$ and the dimension of input $\mathbf{x}$, respectively. Interestingly, this in turn induces a similar decomposition of the above-mentioned GP-UCB acquisition function into a sum 
of independent local acquisition functions $\varphi^{(i)}_{t}(\mathbf{x}^{(i)})\triangleq \mu^{(i)}_t(\mathbf{x}^{(i)}) + {\beta^{1/2}_{t+1}}\sigma^{(i)}_t(\mathbf{x}^{(i)})$ for $i=1,\ldots,d$, that is, $\sum_{i=1}^d \varphi^{(i)}_{t}(\mathbf{x}^{(i)})$.
As a result, each local acquisition function $\varphi^{(i)}_{t}(\mathbf{x}^{(i)})$ can be independently maximized along a separate input dimension,
thus reducing the overall computational cost to linear
in the number $d$ of input dimensions. 
However, such a HBO algorithm and a few others~\cite{Wang17b,Zi17} preclude the interdependent effects of different input dimensions on the output of $f$~\cite{Klemen12}, which can severely compromise their performance.
We will describe in Sections~\ref{problem} and~\ref{analysis} how our DEC-HBO algorithm can exploit the interdependent effects of various input components (on the output of $f$) for boosting the BO performance and still preserve scalability in the number of input dimensions as well as guarantee no-regret performance asymptotically.
%
%
%
\section{Problem Formulation}
\label{problem}
This section first introduces the sparse factor representation of the unknown objective function $f$ (Section~\ref{anova}). Then, we exploit it to reformulate the GP-UCB acquisition function to a form that can be similarly represented by a sparse factor graph (Section~\ref{acquisition})
and hence be efficiently optimized in a decentralized manner using distributed message passing (Section~\ref{dBO}) to achieve the same trademark (asymptotic) no-regret performance guarantee (Section~\ref{analysis}).
%
\subsection{Sparse Factor Graph Representation}
\label{anova}
To scale up BO to high-dimensional optimization problems 
while still taking into account the interdependent effects of various input components on the output of $f$, we first state the following key structural assumption to represent $f$ as a sparse yet rich and highly expressive factor graph:

\begin{assumption} 
The $d$-dimensional objective function $f$ can be decomposed into 
a sum of $|\mathcal{U}|$ factor functions 
$\{f_{\mathcal{I}}\}_{\mathcal{I}\in\mathcal{U}}$, each of which depends on a $|\mathcal{I}|$-dimensional input $\mathbf{x}^{\mathcal{I}}$ comprising only a small, \emph{possibly overlapping} subset $\mathcal{I}\subseteq\mathcal{S} \triangleq \{1, 2, \ldots, d\}$ of the input components of $\mathbf{x}$ (i.e., $|\mathcal{I}| \ll d$), that is, $f(\mathbf{x})\triangleq\sum_{\mathcal{I}\in\mathcal{U}} f_{\mathcal{I}}(\mathbf{x}^{\mathcal{I}})$.
\end{assumption}

\noindent
Intuitively, Assumption 1 decomposes the high $d$-dimensional optimization problem into small sub-problems, each of which involves optimizing a single factor function $f_{\mathcal{I}}$ over a low $|\mathcal{I}|$-dimensional input space (and is hence much less costly) while succinctly encoding the compatibility of its selected input $\mathbf{x}^{\mathcal{I}}$ with that of other factor functions through their common input components;
the latter is completely disregarded by the state-of-the-art HBO algorithms~\cite{Kandasamy15,Wang17b,Zi17} due to their highly restrictive decoupling assumption (Section~\ref{gpucb}). 
In practice, our more relaxed assumption thus allows prior knowledge of the interdependent effects of different input components (on the output of $f$) to be explicitly and succinctly encoded into the sparse factor graph representation of $f$.
As a result, the input components of the same factor function have direct interdependent effects on the output of $f$, while the input components that are distinct between any two factor functions have indirect interdependent effects on the output of $f$ via their common input components;
the latter is predominant due to sparsity of the factor graph representing $f$.
Interestingly, our assumption can be further coupled with Assumption 2 below to induce an additive GP model of $f$~\cite{Duvenaud11} with truncated ANOVA kernels which have shown to be highly expressive in characterizing the latent interaction between different input components.
%

Despite needing to maintain the compatibility of their selected inputs, 
these factor functions can still be optimized in a decentralized manner if a message passing protocol can be established between them to allow those with common input components to coordinate their optimization efforts
without requiring any factor function to handle input components not of its own.
To achieve this, two non-trivial research questions arise: Firstly, how can these factor functions (with compatibility constraints) without 
analytic expressions nor 
black-box generators be optimized (see Sections~\ref{acquisition} and~\ref{dBO})? 
Secondly, even if it is possible to optimize each factor function, how can their coordinated optimization efforts be guaranteed to converge the selected input queries to the global maximizer of $f$ (see Section~\ref{analysis})?  
Note that the second question has not been addressed by the previously established convergence guarantee for the GP-UCB algorithm~\cite{Srinivas10} as it only applies to the centralized setting but not our decentralized BO setting.
%
%
\subsection{Acquisition Function}
\label{acquisition}
To optimize each factor function without having direct access to its black-box generator, we need a mechanism that can draw inference on the output of the factor function $f_{\mathcal{I}}$ given only the noisy observed outputs of $f$. This is achieved with the following assumption:
\begin{assumption} 
Each factor function $f_{\mathcal{I}}$ in the decomposition of $f$ 
in Assumption 1 is independently distributed by a GP $\mathcal{GP}(0, \sigma^{\mathcal{I}}_0(\mathbf{x}^{\mathcal{I}}, \mathbf{x}'^{\mathcal{I}}))$ with prior mean $\mu^{\mathcal{I}}_0(\mathbf{x}^{\mathcal{I}}) \triangleq 0$ and covariance $\sigma^{\mathcal{I}}_0(\mathbf{x}^{\mathcal{I}}, \mathbf{x}'^{\mathcal{I}})$.
\end{assumption}

\noindent
Assumption 2 implies that $f$ is distributed by a GP $\mathcal{GP}(0, \sigma_0(\mathbf{x}, \mathbf{x}'))$ with prior mean $0$ and covariance $\sigma_0(\mathbf{x}, \mathbf{x}')\triangleq \sum_{\mathcal{I}\in\mathcal{U}}\sigma^{\mathcal{I}}_0(\mathbf{x}^{\mathcal{I}}, \mathbf{x}'^{\mathcal{I}})$. 
It follows that for any subset $\mathcal{I} \subseteq \mathcal{S}$ of the input components of any input $\mathbf{x}$  and input queries $\mathbf{x}_1,\ldots,\mathbf{x}_t$,
the prior distribution of $(f_\mathcal{I}(\mathbf{x}^{\mathcal{I}}), f(\mathbf{x}_1),\ldots, f(\mathbf{x}_t))^\top$ is a Gaussian. 
Then, given a column vector $\mathbf{y} \triangleq (y_i)^{\top}_{i=1,\ldots,t})$ of noisy outputs observed from evaluating $f$ at the selected input queries $\mathbf{x}_1,\ldots,\mathbf{x}_t$ after $t$ iterations, the posterior distribution of the output of the factor function $f_{\mathcal{I}}$ at some input $\mathbf{x}^{\mathcal{I}}$ in iteration $t+1$ is a Gaussian $\mathcal{N}(f_{\mathcal{I}}(\mathbf{x}^{\mathcal{I}})|\mu_{t}^\mathcal{I}(\mathbf{x}^{\mathcal{I}}), \sigma_{t}^\mathcal{I}(\mathbf{x}^{\mathcal{I}})^2)$ with the following posterior mean and variance: 
\begin{equation}
\begin{array}{rcl}
\mu_{t}^\mathcal{I}(\mathbf{x}^{\mathcal{I}}) &\triangleq & \mathbf{k}_{\mathbf{x}}^{\mathcal{I}^\top}(\mathbf{K} + \sigma_n^2\mathbf{I})^{-1}\mathbf{y} \ ,\\
\sigma_{t}^\mathcal{I}(\mathbf{x}^{\mathcal{I}})^2 &\triangleq & \sigma_0^\mathcal{I}(\mathbf{x}^{\mathcal{I}},\mathbf{x}^{\mathcal{I}}) - \mathbf{k}_{\mathbf{x}}^{\mathcal{I}^\top}(\mathbf{K} + \sigma_n^2\mathbf{I})^{-1}\mathbf{k}_{\mathbf{x}}^{\mathcal{I}}
\end{array}
\label{eq:9}
\end{equation}
where $\mathbf{k}_{\mathbf{x}}^{\mathcal{I}} \triangleq (\sigma_0^\mathcal{I}(\mathbf{x}^{\mathcal{I}}, \mathbf{x}_i^{\mathcal{I}}))^\top_{i=1,\ldots,t}$ and $\mathbf{K} \triangleq (\sigma_0(\mathbf{x}_i,\mathbf{x}_j))_{i,j=1,\ldots,t}$. 
Using~\eqref{eq:9}, 
we can naively adopt the 
HBO algorithm of~\citeauthor{Kandasamy15}~\shortcite{Kandasamy15} (Section~\ref{gpucb})
by independently maximizing a separate local acquisition function for every corresponding factor function $f_{\mathcal{I}}$.
But, this does not guarantee compatibility of the inputs selected by independently maximizing each local acquisition function due to their common input components. 
So, we instead propose to jointly maximize them using the following additive acquisition function:
\begin{equation}
\begin{array}{c}
\displaystyle\sum_{\mathcal{I}\in\mathcal{U}}\varphi^\mathcal{I}_{t}(\mathbf{x}^\mathcal{I})\ ,\quad\varphi^\mathcal{I}_{t}(\mathbf{x}^\mathcal{I}) \triangleq \mu_t^\mathcal{I}(\mathbf{x}^{\mathcal{I}}) + \beta_{t+1}^{1/2}\sigma_{t}^\mathcal{I}(\mathbf{x}^{\mathcal{I}}) 
\end{array}
\label{eq:11}
\end{equation}
which appears, with high probability, to bound the global maximum $f(\mathbf{x}_\ast) = \max_{\mathbf{x}\in\mathcal{X}}\sum_\mathcal{I}f_\mathcal{I}(\mathbf{x}^\mathcal{I})$ from above, as shown later in Section~\ref{analysis}. Intuitively,~\eqref{eq:11} is similar to the GP-UCB acquisition function~\cite{Srinivas10} in the sense that both are exploiting the GP posterior mean of $f$ to select the next input query since it can be shown that the sum of GP posterior means of the outputs of all factor functions is equal to the GP posterior mean of their sum (i.e., output of $f$). 
On the other hand, unlike GP-UCB, our proposed acquisition function~\eqref{eq:11} uses 
the sum of GP posterior variances of the outputs of all factor functions
instead of the GP posterior variance of their sum (i.e., output of $f$) in order to construct an upper bound on the global maximum. This interestingly allows~\eqref{eq:11} to be optimized efficiently in a decentralized manner (Section~\ref{dBO}) and at the same time preserves the asymptotic optimality of our DEC-HBO algorithm (Section~\ref{analysis}).
%
\subsection{Decentralized HBO (DEC-HBO)}
\label{dBO}
When the interdependent effects of the majority of input components on the output of $f$ are indirect, our proposed acquisition function~\eqref{eq:11} can in fact be represented by a sparse factor graph and hence
be efficiently optimized in a decentralized manner.
To do this, 
let~\eqref{eq:11} be represented by a factor graph with each factor and variable node denoting, respectively, a different local acquisition function and input component  
such that every edge connecting a factor node to some variable node
implies a local acquisition function involving the participation of some input component.
The following message passing protocol between the factor and variable nodes can then be used to optimize 
$\sum_{\mathcal{I}\in\mathcal{U}}\varphi_t^\mathcal{I}(\mathbf{x}^\mathcal{I})$~\eqref{eq:11} via \emph{dynamic programming} (DP):%
\subsubsection{Message Passing Protocol.} In iteration $t+1$, let $m_{\varphi_t^{\mathcal{I}} \rightarrow \mathbf{x}^{(i)}}(h)$ and $m_{\mathbf{x}^{(i)} \rightarrow \varphi_t^{\mathcal{I}}}(h)$ denote messages to be passed from a factor node $\varphi_t^{\mathcal{I}}(\mathbf{x}^\mathcal{I})$ (i.e., a local acquisition function) to a variable node $\mathbf{x}^{(i)}$ (i.e., component $i\in\mathcal{I}$ of its input $\mathbf{x}^\mathcal{I}$)
and from $\mathbf{x}^{(i)}$ back to $\varphi_t^{\mathcal{I}}(\mathbf{x}^\mathcal{I})$, respectively. Given $\mathbf{x}^{(i)} \triangleq h$,
\begin{equation}
\begin{array}{l}
m_{\varphi_t^{\mathcal{I}} \rightarrow \mathbf{x}^{(i)}}(h) 
\triangleq\displaystyle \max_{\mathbf{h}^{\mathcal{I}\setminus i}\in \mathcal{D}(\mathbf{x}^{\mathcal{I}\setminus i})} {\Delta}^{-i}_{\varphi_t^\mathcal{I}}(\mathbf{h}^{\mathcal{I}\setminus i}) + \varphi_t^\mathcal{I}(\mathbf{h}^{\mathcal{I}\setminus i}, h) ,\\
m_{\mathbf{x}^{(i)} \rightarrow \varphi_t^{\mathcal{I}}}(h) \triangleq \displaystyle\sum_{\mathcal{I}'\in\mathcal{A}(i)\setminus\{\mathcal{I}\}} m_{\varphi_t^{\mathcal{I}'} \rightarrow \mathbf{x}^{(i)}}(h)
\end{array}
\label{eq:12}
\end{equation}
where $\mathcal{I}\backslash i$ is used in place of $\mathcal{I}\backslash\{i\}$ to ease notations,
${\Delta}^{-i}_{\varphi_t^\mathcal{I}}(\mathbf{h}^{\mathcal{I}\setminus i}) \triangleq \sum_{j \in \mathcal{I}\setminus{i}} m_{\mathbf{x}^{(j)} \rightarrow \varphi_t^\mathcal{I}}(\mathbf{h}^{(j)})$, 
$\mathcal{D}(\mathbf{x}^{\mathcal{I}\setminus i})$ is the domain of input $\mathbf{x}^{\mathcal{I}\setminus i}$, 
$\mathbf{h}^{(j)}$ denotes component $j$ of $\mathbf{h}^{\mathcal{I}\setminus i}$, 
and $\mathcal{A}(i)\triangleq\{\mathcal{I}'|f_{\mathcal{I}'}(\mathbf{x}^{\mathcal{I}'})\ \text{is a factor function}\wedge i\in\mathcal{I}'\}$. 
%
%
These message updates~\eqref{eq:12} can be performed simultaneously, which yields a fully decentralized optimization algorithm where full knowledge of the results of optimization are accessible to all nodes. 
This can be perceived as a concurrent learning process where each node tries to perfect its own DP perspective through exchanging information with its immediate neighbors. As the messages are passed back and forth simultaneously among nodes, their individual perspectives are updated and steadily converge to an equilibrium that maximizes $\sum_{\mathcal{I}\in\mathcal{U}}\varphi_t^\mathcal{I}(\mathbf{x}^\mathcal{I})$~\eqref{eq:11}. 
Our decentralized optimization algorithm yields a huge computational advantage in a sparse factor graph since the cost of evaluating each message in any iteration is only as expensive as iterating through the input domain of the local acquisition function involving the largest number of input components (i.e., maximum factor size), 
which is usually much smaller than the entire input domain. 
The overall computational cost at each node thus reduces at an exponential rate in the ratio between the sizes of the original input domain and that of such a local acquisition function.

Upon convergence\footnote{Though the convergence of our message passing protocol is only guaranteed when the factor graph is a tree, it empirically converges to a competitive performance quickly (Section~\ref{hypertuning}). 
To guarantee the performance for a general factor graph, a bounded variant of the max-sum algorithm \cite{Rogers11} can be considered.}, the final message $m_{\varphi_t^{\mathcal{I}'} \rightarrow \mathbf{x}^{(i)}}(h)$ from every factor node $\varphi_t^{\mathcal{I}'}(\mathbf{x}^{\mathcal{I}'})$ to a variable node 
$\mathbf{x}^{(i)}$ 
(i.e., component $i\in\mathcal{I}'$ of its input $\mathbf{x}^{\mathcal{I}'}$)
is the maximum value achieved by optimizing the sum of all remaining factor nodes, except $\varphi_t^\mathcal{I}(\mathbf{x}^\mathcal{I})$, over the remaining variable nodes $\mathbf{x}^{\mathcal{S}\setminus i}$ while fixing $\mathbf{x}^{(i)} = h$. 
As such, component $i$ of the maximizer $\mathbf{x}_{t+1} \triangleq \argmax_{\mathbf{x}\in\mathcal{D}} \sum_{\mathcal{I}}\varphi_t^\mathcal{I}(\mathbf{x}^\mathcal{I})$ can be computed using an arbitrary variable-factor pair $(\mathbf{x}^{(i)}, \varphi_t^\mathcal{I}(\mathbf{x}^\mathcal{I}))$: 
\begin{equation}
\mathbf{x}_{t+1}^{(i)} \triangleq \argmax_{h \in \mathcal{D}(\mathbf{x}^{(i)})} \max_{\mathbf{h}^{\mathcal{I}\setminus i}\in \mathcal{D}(\mathbf{x}^{\mathcal{I}\setminus i})} \varphi_t^\mathcal{I}(\mathbf{h}^{\mathcal{I}\setminus i}, h) + m_{\mathbf{x}^{(i)} \rightarrow \varphi_t^\mathcal{I}}(h) 
\label{eq:13}
\end{equation}
for all $i \in \mathcal{I}$ where $\mathcal{D}(\mathbf{x}^{(i)})$ denotes the domain of input component $\mathbf{x}^{(i)}$. Note that~\eqref{eq:13} only operates in the domains $\mathcal{D}(\mathbf{x}^{(i)})$ and $\mathcal{D}(\mathbf{x}^{\mathcal{I}\setminus i})$.
Its time complexity is thus bounded by the cost of iterating through the input domain of the local acquisition function involving the largest number of input components (i.e., maximum factor size), as  analyzed in\if\myproof1 Appendix~\ref{app:e}. \fi\if\myproof0 \cite{Nghia17}. \fi 
Our decentralized optimization algorithm is similar in spirit to the max-sum algorithm for solving the well-known {distributed constraint optimization problem}~\cite{Leite14} operating in discrete input domains and in fact adapts it
to maximize our additive acquisition function~\eqref{eq:11}
over a continuous input domain.
Such an adaptation is achieved by scheduling an iterative process of domain discretization with increasing granularity. 
Nevertheless, our DEC-HBO algorithm is guaranteed to be asymptotically optimal, as further detailed in Section~\ref{continuous}. 
DEC-HBO requires a specification of the input partition $\mathcal{U} \subseteq 2^{\mathcal{S}}$ that underlies our additive acquisition function~\eqref{eq:11}, which can be learned from data\if\myproof1 (Appendix~\ref{fgr}). \fi\if\myproof0 \cite{Nghia17}. \fi 
%
%
\section{Asymptotic Optimality}
\label{analysis}
This section analyzes the asymptotic optimality of our proposed DEC-HBO algorithm (Section~\ref{dBO}) that is powered by our additive acquisition function~\eqref{eq:11}. We will first present a simplified version of our analysis in a simple setting with discrete input domains 
and then generalize it to handle a more realistic setting with continuous input domains. 
%
\subsubsection{Discrete Input Space.}
\label{discrete}
To guarantee the asymptotic optimality of DEC-HBO, it suffices to show that its average regret approaches zero in the limit (i.e., $\mathrm{lim}_{n\rightarrow\infty} R_n/n = 0$). To achieve this, we will first construct upper bounds for the instantaneous regrets $r_t = f(\mathbf{x}_\ast) - f(\mathbf{x}_t)$ (Theorem~\ref{theo:1}) and then combine these results to establish a sub-linear upper bound for the cumulative regret $R_n$ (Theorem~\ref{theo:2}).
\begin{theorem}
\label{theo:1}
Given $\delta \in (0, 1)$, let $\beta_t \triangleq 2\log(|\mathcal{D}||\mathcal{U}|\pi_t/\delta)$ with $\pi_t \triangleq \pi^2t^2 / 6$.\\  
$\mathrm{Pr}(\forall \mathbf{x}\in\mathcal{D}, t\in\mathbb{N} \ \ r_t \leq 2\beta_t^{1/2}\sum_{\mathcal{I}\in\mathcal{U}} \sigma_{t-1}^\mathcal{I}(\mathbf{x}_t^\mathcal{I}))\geq 1 - \delta$.
\end{theorem}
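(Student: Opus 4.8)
The plan is to adapt the classical GP-UCB confidence-interval argument of Srinivas et al. to the additive factor-graph setting, the single essential modification being that the confidence envelope for $f$ is assembled \emph{factor by factor} and then summed. This is precisely what produces the quantity $\sum_{\mathcal{I}\in\mathcal{U}}\sigma_{t-1}^{\mathcal{I}}$ appearing in the statement, rather than the posterior standard deviation of $f$ as a whole.

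First I would fix a factor $\mathcal{I}\in\mathcal{U}$, an input $\mathbf{x}\in\mathcal{D}$, and an iteration $t$. By the joint Gaussianity established just above \eqref{eq:9}, the posterior of $f_{\mathcal{I}}(\mathbf{x}^{\mathcal{I}})$ given the observations is $\mathcal{N}(\mu_{t-1}^{\mathcal{I}}(\mathbf{x}^{\mathcal{I}}),\sigma_{t-1}^{\mathcal{I}}(\mathbf{x}^{\mathcal{I}})^2)$. Applying the Gaussian tail inequality $\mathrm{Pr}(|Z|>c)\leq e^{-c^2/2}$ with $c=\beta_t^{1/2}$ gives $\mathrm{Pr}(|f_{\mathcal{I}}(\mathbf{x}^{\mathcal{I}})-\mu_{t-1}^{\mathcal{I}}(\mathbf{x}^{\mathcal{I}})|>\beta_t^{1/2}\sigma_{t-1}^{\mathcal{I}}(\mathbf{x}^{\mathcal{I}}))\leq e^{-\beta_t/2}$ for each triple $(\mathbf{x},\mathcal{I},t)$. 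Second I would take a union bound over all $(\mathbf{x},\mathcal{I})\in\mathcal{D}\times\mathcal{U}$ and all $t\in\mathbb{N}$. Substituting $\beta_t=2\log(|\mathcal{D}||\mathcal{U}|\pi_t/\delta)$ yields $e^{-\beta_t/2}=\delta/(|\mathcal{D}||\mathcal{U}|\pi_t)$, so summing over the $|\mathcal{D}||\mathcal{U}|$ pairs cancels those factors and leaves $\delta/\pi_t$ per iteration; since $\sum_{t\geq 1}\pi_t^{-1}=(6/\pi^2)\sum_{t\geq 1}t^{-2}=1$ by the choice $\pi_t=\pi^2 t^2/6$, the total failure probability is at most $\delta$. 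Thus with probability at least $1-\delta$ the bound $|f_{\mathcal{I}}(\mathbf{x}^{\mathcal{I}})-\mu_{t-1}^{\mathcal{I}}(\mathbf{x}^{\mathcal{I}})|\leq\beta_t^{1/2}\sigma_{t-1}^{\mathcal{I}}(\mathbf{x}^{\mathcal{I}})$ holds simultaneously for every factor, input, and iteration.

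On this high-probability event I would then sum the per-factor bounds over $\mathcal{I}\in\mathcal{U}$ via the triangle inequality, using the decomposition $f(\mathbf{x})=\sum_{\mathcal{I}}f_{\mathcal{I}}(\mathbf{x}^{\mathcal{I}})$ and the identity $\sum_{\mathcal{I}}\mu_{t-1}^{\mathcal{I}}=\mu_{t-1}$ (the posterior mean of $f$). This produces the two-sided envelope $\mu_{t-1}(\mathbf{x})-\beta_t^{1/2}\sum_{\mathcal{I}}\sigma_{t-1}^{\mathcal{I}}(\mathbf{x}^{\mathcal{I}})\leq f(\mathbf{x})\leq\sum_{\mathcal{I}}\varphi_{t-1}^{\mathcal{I}}(\mathbf{x}^{\mathcal{I}})$, where I have recognized the upper side as the acquisition function \eqref{eq:11} evaluated with index $t-1$. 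Finally I would invoke the defining optimality of $\mathbf{x}_t$, namely $\sum_{\mathcal{I}}\varphi_{t-1}^{\mathcal{I}}(\mathbf{x}_*^{\mathcal{I}})\leq\sum_{\mathcal{I}}\varphi_{t-1}^{\mathcal{I}}(\mathbf{x}_t^{\mathcal{I}})$ since $\mathbf{x}_t$ maximizes \eqref{eq:11}; applying the upper envelope at $\mathbf{x}_*$ and the lower envelope at $\mathbf{x}_t$ and subtracting, the posterior-mean terms cancel and the two standard-deviation terms add, giving $r_t=f(\mathbf{x}_*)-f(\mathbf{x}_t)\leq 2\beta_t^{1/2}\sum_{\mathcal{I}}\sigma_{t-1}^{\mathcal{I}}(\mathbf{x}_t^{\mathcal{I}})$. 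Since $\mathcal{D}$ is finite in this discrete setting and $\mathbf{x}_t,\mathbf{x}_*\in\mathcal{D}$, no discretization is needed and the random choice of $\mathbf{x}_t$ is covered automatically by the union bound over $\mathcal{D}$.

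The step I expect to demand the most care is the union bound over factors, since it is what forces the extra $|\mathcal{U}|$ inside the logarithm defining $\beta_t$ and is the technical price of bounding each factor's deviation separately. This separation is not incidental but essential: it is exactly what lets the final bound be written as the \emph{sum} $\sum_{\mathcal{I}}\sigma_{t-1}^{\mathcal{I}}$ of per-factor posterior standard deviations, which decomposes over the factor graph and enables the decentralized message passing, rather than the posterior standard deviation $\sigma_{t-1}$ of $f$ itself (which in general satisfies $\sigma_{t-1}\leq\sum_{\mathcal{I}}\sigma_{t-1}^{\mathcal{I}}$ and would not admit such a decomposition).
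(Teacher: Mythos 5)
Your proposal is correct and follows essentially the same route as the paper's proof: a per-factor Gaussian tail bound, a union bound over all tuples $(\mathbf{x},\mathcal{I},t)$ whose $\delta/(|\mathcal{D}||\mathcal{U}|\pi_t)$ terms telescope to $\delta$, summation over factors to get a two-sided envelope on $f$, and then the optimality of $\mathbf{x}_t$ for the additive acquisition function applied at $\mathbf{x}_*$ and $\mathbf{x}_t$ to yield $r_t \leq 2\beta_t^{1/2}\sum_{\mathcal{I}\in\mathcal{U}}\sigma_{t-1}^{\mathcal{I}}(\mathbf{x}_t^{\mathcal{I}})$. The paper packages these two stages as Lemma~\ref{lem:1} and Lemma~\ref{lem:2}, but the content is identical, and your closing observation about why the sum of per-factor standard deviations (rather than $\sigma_{t-1}$ of $f$) is the right quantity matches the paper's remark on the flaw in the earlier additive-GP-UCB analysis.
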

Its proof is in\if\myproof1 Appendix~\ref{app:a}. \fi\if\myproof0 \cite{Nghia17}. \fi
Theorem~\ref{theo:1} establishes a universal bound that holds simultaneously for all instantaneous regrets $r_t$ with an arbitrarily high confidence and is adjustable via parameter $\beta_t$ to trade off between exploitation vs. exploration. More importantly, Theorem~\ref{theo:1} immediately implies the following bound on the cumulative regret $R_n$ based on the notion of \emph{maximum information gain} below:
\begin{definition}
\label{def:1}
Let $\mathcal{A}\triangleq\{\mathbf{x}_t\}_{t=1,\ldots,n} \subseteq \mathcal{D}$ and 
$\mathbf{f}_\mathcal{A}^\mathcal{I} \triangleq (f_\mathcal{I}(\mathbf{x}^\mathcal{I}_t))^\top_{t=1,\ldots,n}$. 
Suppose that a column vector $\mathbf{y}^\mathcal{I}_\mathcal{A} \triangleq (y_\mathcal{I}(\mathbf{x}^\mathcal{I}_t))^\top_{t=1,\ldots,n}$ of noisy outputs $y_\mathcal{I}(\mathbf{x}^\mathcal{I}_t) = f_\mathcal{I}(\mathbf{x}^\mathcal{I}_t) + \epsilon$ can be observed from evaluating the latent factor function $f_\mathcal{I}$ at input queries $\mathbf{x}^\mathcal{I}_1,\ldots,\mathbf{x}^\mathcal{I}_n$, respectively.
%
Then, the maximum information gain about $\mathbf{f}^\mathcal{I}_\mathcal{A}$ given $\mathbf{y}^\mathcal{I}_\mathcal{A}$ can be characterized in terms of their Shannon mutual information:
$
\gamma_n^\mathcal{I} \triangleq\max_{\mathcal{A}:\mathcal{A}\subseteq\mathcal{D},|\mathcal{A}|=n} \boldsymbol{I}(\mathbf{f}^\mathcal{I}_\mathcal{A},\mathbf{y}_\mathcal{A}^\mathcal{I}) 
$.
The total maximum information gain is
$\gamma_n \triangleq \sum_{\mathcal{I}\in\mathcal{U}} \gamma_n^\mathcal{I}$.
\end{definition}
The total maximum information gain $\gamma_n$ defined above can then be exploited to bound the cumulative regret $R_n$:
\begin{theorem}
\label{theo:2}
Given $\delta\in(0,1)$, 
$
\mathrm{Pr}(R_n  \leq (Cn\beta_n\gamma_n)^{1/2}  \in o(n)) \geq 1 - \delta 
$
under some mild condition on $\sigma_{t}^{\mathcal{I}}(\mathbf{x}^{\mathcal{I}})^2$
where $C$ is some constant defined in\if\myproof1 Appendix~\ref{app:b}.\fi\if\myproof0 \cite{Nghia17}.\fi
\end{theorem}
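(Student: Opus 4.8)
The plan is to follow the canonical GP-UCB regret template of \citeauthor{Srinivas10}~\shortcite{Srinivas10}, but lifted to the additive factor-graph setting, with the per-factor maximum information gains $\gamma_n^{\mathcal{I}}$ playing the role that a single $\gamma_n$ plays in the centralized analysis. First I would condition on the high-probability event of Theorem~\ref{theo:1}, on which every instantaneous regret satisfies $r_t \leq 2\beta_t^{1/2}\sum_{\mathcal{I}\in\mathcal{U}}\sigma_{t-1}^{\mathcal{I}}(\mathbf{x}_t^{\mathcal{I}})$. Since $\beta_t = 2\log(|\mathcal{D}||\mathcal{U}|\pi_t/\delta)$ is non-decreasing in $t$, summing over $t=1,\ldots,n$ and pulling out $\beta_n^{1/2}$ gives $R_n \leq 2\beta_n^{1/2}\sum_{t=1}^n\sum_{\mathcal{I}\in\mathcal{U}}\sigma_{t-1}^{\mathcal{I}}(\mathbf{x}_t^{\mathcal{I}})$. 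Treating the double sum as $n|\mathcal{U}|$ nonnegative terms and applying Cauchy--Schwarz yields
\[
R_n^2 \leq 4\beta_n\, n\,|\mathcal{U}|\sum_{\mathcal{I}\in\mathcal{U}}\sum_{t=1}^n \sigma_{t-1}^{\mathcal{I}}(\mathbf{x}_t^{\mathcal{I}})^2 .
\]

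Everything then hinges on a per-factor key lemma of the form $\sum_{t=1}^n \sigma_{t-1}^{\mathcal{I}}(\mathbf{x}_t^{\mathcal{I}})^2 \leq C_1\,\gamma_n^{\mathcal{I}}$ for a kernel-dependent constant $C_1$. Granting this, summing over $\mathcal{I}$ and using $\gamma_n = \sum_{\mathcal{I}}\gamma_n^{\mathcal{I}}$ (Definition~\ref{def:1}) collapses the bound to $R_n^2 \leq C\, n\beta_n\gamma_n$ with $C \triangleq 4C_1|\mathcal{U}|$, and taking square roots delivers the stated inequality $R_n \leq (Cn\beta_n\gamma_n)^{1/2}$. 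The $1-\delta$ probability is inherited verbatim from Theorem~\ref{theo:1}, since every step after invoking it is deterministic on that event.

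To establish the key lemma I would proceed in two stages. The mild boundedness condition on $\sigma_t^{\mathcal{I}}(\mathbf{x}^{\mathcal{I}})^2$ first supplies the linearization: because the factor posterior variances are uniformly bounded by some $B$ on the relevant range and $\log(1+\sigma_n^{-2}s)$ is concave, one has $s \leq \frac{B}{\log(1+\sigma_n^{-2}B)}\log(1+\sigma_n^{-2}s)$, so $\sum_t \sigma_{t-1}^{\mathcal{I}}(\mathbf{x}_t^{\mathcal{I}})^2$ is controlled by $\sum_t \log(1+\sigma_n^{-2}\sigma_{t-1}^{\mathcal{I}}(\mathbf{x}_t^{\mathcal{I}})^2)$. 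The second stage caps this log-sum by $\gamma_n^{\mathcal{I}}$ using the additive structure: by Assumption~2 the factors are mutually independent GPs, so observing the summed function $f$ is a strictly noisier channel for $f_{\mathcal{I}}$ than a direct evaluation $f_{\mathcal{I}}(\mathbf{x}^{\mathcal{I}})+\epsilon$. Writing $\mathbf{y} = \mathbf{y}^{\mathcal{I}}_{\mathcal{A}} + \mathbf{g}_{\mathcal{A}}$ with $\mathbf{g}_{\mathcal{A}} \triangleq (\sum_{\mathcal{J}\neq\mathcal{I}} f_{\mathcal{J}}(\mathbf{x}_t^{\mathcal{J}}))_t$ independent of $(\mathbf{f}_{\mathcal{A}}^{\mathcal{I}},\boldsymbol{\epsilon})$ makes $\mathbf{f}_{\mathcal{A}}^{\mathcal{I}} \to \mathbf{y}^{\mathcal{I}}_{\mathcal{A}} \to \mathbf{y}$ a Markov chain, so the data-processing inequality bounds the information the full observations carry about $\mathbf{f}_{\mathcal{A}}^{\mathcal{I}}$ by $\boldsymbol{I}(\mathbf{f}_{\mathcal{A}}^{\mathcal{I}},\mathbf{y}^{\mathcal{I}}_{\mathcal{A}})$, whose maximum over query sets is exactly $\gamma_n^{\mathcal{I}}$. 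Finally I would verify the order claim $(Cn\beta_n\gamma_n)^{1/2}\in o(n)$: since $\beta_n = O(\log n)$ and $\gamma_n$ is sublinear in $n$ for the kernels underlying the truncated ANOVA model (and trivially bounded for finite $\mathcal{D}$), we get $n\beta_n\gamma_n = o(n^2)$ and hence the square root is $o(n)$, giving the sub-linear cumulative regret.

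I expect the main obstacle to be the second stage of the key lemma. The delicate point is that $\sigma_{t-1}^{\mathcal{I}}$ is the posterior variance of $f_{\mathcal{I}}$ given observations of the \emph{summed} function $f$ --- under which the a-priori independent factors become posterior-correlated, and the per-step variance reduction no longer equals $\frac{1}{2}\log(1+\sigma_n^{-2}\sigma_{t-1}^{\mathcal{I}}(\mathbf{x}_t^{\mathcal{I}})^2)$ because the effective noise on the $f_{\mathcal{I}}$ channel includes the competing factors. Reconciling the algorithm's factor variances (computed under full, coupled observations) with $\gamma_n^{\mathcal{I}}$ (defined through clean, direct observations of $f_{\mathcal{I}}$ alone) is exactly what the independence/DPI argument and the \emph{mild condition} are introduced to handle: the latter is precisely what keeps the linearization constant $C_1$ finite and lets the per-factor variance sums be charged against the per-factor information gains.
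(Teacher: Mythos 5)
Your overall skeleton (condition on Theorem~\ref{theo:1}, sum and apply Cauchy--Schwarz, linearize $s \mapsto \log(1+\sigma_n^{-2}s)$ via monotonicity of $s/\log(1+s)$, then charge per-factor variance sums to per-factor information gains and invoke sublinearity of $\gamma_n^{\mathcal{I}}$ from Theorem~5 of \citeauthor{Srinivas10}~\shortcite{Srinivas10}) matches the paper's proof. The genuine gap is in your second stage. The data-processing inequality does give $\boldsymbol{I}(\mathbf{f}_{\mathcal{A}}^{\mathcal{I}}, \mathbf{y}) \leq \boldsymbol{I}(\mathbf{f}_{\mathcal{A}}^{\mathcal{I}}, \mathbf{y}_{\mathcal{A}}^{\mathcal{I}}) \leq \gamma_n^{\mathcal{I}}$, but that is not the quantity you need. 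What you need is $\sum_{t=1}^n \log\bigl(1+\sigma_n^{-2}\sigma_{t-1}^{\mathcal{I}}(\mathbf{x}_t^{\mathcal{I}})^2\bigr) \lesssim \gamma_n^{\mathcal{I}}$, and the left-hand sum is \emph{not} a mutual information when the conditioning observations are of the summed $f$: the telescoping identity behind Lemma~5.3 of \citeauthor{Srinivas10}~\shortcite{Srinivas10} requires the observations to be direct evaluations of the very process whose variance appears, corrupted by i.i.d.\ noise. Under sum observations no inequality in the needed direction holds either. Concretely, take two independent factors with equal prior variance $\sigma_0^2 \gg \sigma_n^2$ queried at the same inputs forever: observing only their sum can never disentangle them, so $\sigma_{t-1}^{\mathcal{I}}(\mathbf{x}_t^{\mathcal{I}})^2$ stays above roughly $\sigma_0^2/2$ for all $t$ and the left-hand sum grows linearly in $n$, while $\boldsymbol{I}(\mathbf{f}_{\mathcal{A}}^{\mathcal{I}},\mathbf{y}) \leq \gamma_n^{\mathcal{I}}$ remains sublinear. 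So DPI alone cannot deliver your key lemma.

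The paper closes this gap by assumption, and that is exactly what the ``mild condition'' in the theorem statement refers to: its Assumption~4 postulates a decreasing function $h$ with $\lim_{t\rightarrow\infty}h(t)>0$ such that $h(t-1)\,\sigma_{t-1}^{\mathcal{I}}(\mathbf{x}_t^{\mathcal{I}})^2 \leq \widehat{\sigma}_{t-1}^{\mathcal{I}}(\mathbf{x}_t^{\mathcal{I}})^2$, where $\widehat{\sigma}_{t-1}^{\mathcal{I}}$ is the posterior variance under \emph{hypothetical direct} noisy observations of $f_{\mathcal{I}}$ alone. This lets the algorithm's coupled-observation variances be replaced, up to the constant $1/\lim_{t\rightarrow\infty}h(t)$ absorbed into $C$, by the hatted ones, to which the standard information-gain identity applies verbatim and yields $\gamma_n^{\mathcal{I}}$. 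Your proposal instead reads the mild condition as a uniform boundedness device for the linearization constant; that boundedness (by the signal variance) is automatic and was never the issue. Without importing something equivalent to Assumption~4, your key lemma $\sum_{t=1}^n \sigma_{t-1}^{\mathcal{I}}(\mathbf{x}_t^{\mathcal{I}})^2 \leq C_1\gamma_n^{\mathcal{I}}$ is false in general, as the two-factor example shows, and the proof cannot be completed along the route you describe.
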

Its proof is in\if\myproof1 Appendix~\ref{app:b}.\fi\if\myproof0 \cite{Nghia17}.\fi
\subsubsection{Remark}~\citeauthor{Kandasamy15}~\shortcite{Kandasamy15} have attempted to bound the cumulative regret $R_n$ in terms of the maximum information gain $\boldsymbol{I}(\mathbf{f}_\mathcal{A},\mathbf{y}_\mathcal{A})$ of the objective function $f$ directly (i.e., $\mathbf{f}_\mathcal{A}\triangleq (f(\mathbf{x}_t))^\top_{t=1,\ldots,n}$ and $\mathbf{y}_\mathcal{A}\triangleq (y_t)^\top_{t=1,\ldots,n}$) for an extreme special case of our work where the effects of all input components on the output of $f$ are statistically independent. The validity of their proof appears to depend on the assumption that the sum of the GP posterior variances of the outputs of all factor functions is less than or equal to the GP posterior variance of the output of the objective function $f$, which is flawed as recently acknowledged by the authors in \cite{Kandasamy15a}. Our analysis\if\myproof1 (Appendix~\ref{app:b}) \fi\if\myproof0 \cite{Nghia17} \fi instead uses a different quantity (see Definition $1$) to bound the cumulative regret $R_n$ and therefore avoid making such a flawed assumption.

Theorem~\ref{theo:2} implies $\lim_{n \rightarrow \infty} R_n/n = 0$ which guarantees the desired asymptotic optimality of DEC-HBO (hence, no regret) with an arbitrarily high confidence. However, when the input space is infinite, $\langle\beta_t\rangle_t$ tend to infinity and hence void the above analysis. To address this caveat, we extend our analysis to handle infinite, continuous input spaces by assuming Lipschitz continuity of the objective function $f$.
\subsubsection{Continuous Input Space.}
\label{continuous}
To extend our previous analysis to the setting with infinite, continuous input spaces, we assume  objective function $f$ to be $L$-Lipschitz continuous:
\begin{assumption} 
There exist constants $a, b, L > 0$ such that 
$\mathrm{Pr}(\forall \mathbf{x},\mathbf{x}' \in \mathcal{D}\ \ |f(\mathbf{x}) - f(\mathbf{x}')| \leq L\|\mathbf{x} - \mathbf{x}'\|_1)\geq 1- a|\mathcal{U}|\exp(-L^2/b^2)$.
\label{aaa}
\end{assumption}

\noindent
The Lipschitz continuity of $f$ 
can be exploited to establish an upper bound on the cumulative regret $R_n$ without relying on the finiteness of the input space. Intuitively, the key idea is to repeat the above finite-case analysis
for a finite discretization of the input space
by first establishing regret bounds for these discretized inputs.
The resulting bounds can then be related to an arbitrary input  by using the Lipschitz continuity of $f$ in Assumption 3 to bound its output in terms of that of its closest discretized input  with high probability. As such, it can be shown 
that if the discretization (especially its granularity) is carefully designed, then the cumulative regret $R_n$ of our DEC-HBO algorithm can be bounded in terms of the above Lipschitz constants as well as the discretization granularity instead of the (infinite) size of the input space:
\begin{theorem}
\label{theo:3}
Given $\delta\in(0,1)$, there exists a monotonically increasing sequence $\langle\beta_t\rangle_t$ such that $\beta_t \in \mathcal{O}(\log t)$ and
$\mathrm{Pr}(R_n \leq (Cn\beta_n\gamma_n)^{1/2} + \pi^2/6 \in\ o(n)) \geq 1 - \delta$
under some mild condition on $\sigma_{t}^{\mathcal{I}}(\mathbf{x}^{\mathcal{I}})^2$.
\end{theorem}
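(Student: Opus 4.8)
The plan is to reduce the continuous-domain analysis to the discrete-domain results (Theorems~\ref{theo:1} and~\ref{theo:2}) by introducing, at each iteration $t$, a progressively finer discretization $\mathcal{D}_t \subset \mathcal{D}$ and then paying for the discretization error using the Lipschitz continuity of $f$ granted by Assumption~\ref{aaa}. Concretely, I would take $\mathcal{D}_t$ to be a uniform grid with $\tau_t$ points per dimension, so that $|\mathcal{D}_t| = \tau_t^d$ and every $\mathbf{x} \in \mathcal{D}$ has a nearest grid point $[\mathbf{x}]_t \in \mathcal{D}_t$ with $\|\mathbf{x} - [\mathbf{x}]_t\|_1 \leq d/\tau_t$. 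The granularity $\tau_t$ is chosen to grow polynomially in $t$ (on the order of $t^2$, with the constant absorbing $L$, $d$, and the $a,b$ of Assumption~\ref{aaa}) so that two competing requirements hold simultaneously: the per-iteration discretization slack $L\,d/\tau_t$ is at most $1/t^2$, while $\log|\mathcal{D}_t| = d\log\tau_t = \mathcal{O}(\log t)$.

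Next I would re-establish the two confidence ingredients underlying Theorem~\ref{theo:1}, but now on the finite set $\mathcal{D}_t$ rather than all of $\mathcal{D}$. First, a \emph{discretization} upper bound: by a union bound over the $|\mathcal{D}_t|$ grid points, the $|\mathcal{U}|$ factors, and the iterations, the factor-graph UCB $\sum_{\mathcal{I}} \varphi_{t-1}^{\mathcal{I}}(\mathbf{x}^{\mathcal{I}})$ bounds $f(\mathbf{x})$ from above for every $\mathbf{x} \in \mathcal{D}_t$, provided $\beta_t = 2\log(|\mathcal{D}_t||\mathcal{U}|\pi_t/\delta)$. Since $|\mathcal{D}_t| = \tau_t^d$, this choice yields exactly $\beta_t \in \mathcal{O}(\log t)$ and is monotonically increasing, which is what keeps the analysis alive in the infinite-domain limit where the original $\beta_t$ built from $|\mathcal{D}|$ would diverge. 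Second, a \emph{chosen-point} lower bound: since $\mathbf{x}_t$ is determined by the history and $f(\mathbf{x}_t) = \sum_{\mathcal{I}} f_{\mathcal{I}}(\mathbf{x}_t^{\mathcal{I}})$ is, conditionally on that history, Gaussian, a union bound over $t$ alone (with no domain cardinality) gives $f(\mathbf{x}_t) \geq \sum_{\mathcal{I}}\mu_{t-1}^{\mathcal{I}}(\mathbf{x}_t^{\mathcal{I}}) - \beta_t^{1/2}\sum_{\mathcal{I}}\sigma_{t-1}^{\mathcal{I}}(\mathbf{x}_t^{\mathcal{I}})$; here I would reuse the subadditivity of the posterior standard deviation $\sigma_{t-1}(\mathbf{x}_t) \leq \sum_{\mathcal{I}}\sigma_{t-1}^{\mathcal{I}}(\mathbf{x}_t^{\mathcal{I}})$ (Minkowski's inequality applied to the correlated factor posteriors) that already drives Theorem~\ref{theo:1} and that the Remark identifies as the correction to the flawed step in prior work. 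The failure probabilities of these two events, together with that of the Lipschitz event in Assumption~\ref{aaa}, are allocated so their sum stays below $\delta$.

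With these pieces, I would chain the instantaneous-regret bound exactly as in the finite case, inserting the Lipschitz correction only where the continuous maximizer $\mathbf{x}_\ast$ is replaced by its grid proxy $[\mathbf{x}_\ast]_t$. Writing $f(\mathbf{x}_\ast) \leq f([\mathbf{x}_\ast]_t) + L\|\mathbf{x}_\ast - [\mathbf{x}_\ast]_t\|_1 \leq f([\mathbf{x}_\ast]_t) + 1/t^2$, applying the discretization upper bound at $[\mathbf{x}_\ast]_t \in \mathcal{D}_t$, using that $\mathbf{x}_t$ maximizes $\sum_{\mathcal{I}}\varphi_{t-1}^{\mathcal{I}}$ over the continuous $\mathcal{D} \supseteq \mathcal{D}_t$ (so the acquisition at $[\mathbf{x}_\ast]_t$ is no larger than at $\mathbf{x}_t$), and finally the chosen-point lower bound at $\mathbf{x}_t$, I obtain $r_t \leq 2\beta_t^{1/2}\sum_{\mathcal{I}}\sigma_{t-1}^{\mathcal{I}}(\mathbf{x}_t^{\mathcal{I}}) + 1/t^2$, the analogue of Theorem~\ref{theo:1} with one extra $1/t^2$ term. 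Summing over $t$, the discretization errors accumulate into $\sum_{t\geq 1} t^{-2} = \pi^2/6$, and the remaining sum is handled verbatim as in Theorem~\ref{theo:2}: Cauchy--Schwarz followed by the mild boundedness condition on $\sigma_{t-1}^{\mathcal{I}}(\mathbf{x}^{\mathcal{I}})^2$ relates $\sum_t \sum_{\mathcal{I}}\sigma_{t-1}^{\mathcal{I}}(\mathbf{x}_t^{\mathcal{I}})$ to the total maximum information gain $\gamma_n$ of Definition~\ref{def:1}, producing $R_n \leq (Cn\beta_n\gamma_n)^{1/2} + \pi^2/6$. Since $\beta_n \in \mathcal{O}(\log n)$ and $\gamma_n$ is sublinear under the stated condition, the first term is $o(n)$ and the additive $\pi^2/6$ is negligible, so $R_n \in o(n)$.

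The step I expect to be the main obstacle is the simultaneous calibration of $\tau_t$ and $\beta_t$: the grid must be fine enough that the accumulated Lipschitz slack is summable (forcing $\tau_t$ to grow at least like $t^2$) yet coarse enough that $\log|\mathcal{D}_t|$, and hence $\beta_t$, stays $\mathcal{O}(\log t)$, so one must show the admissible window for $\tau_t$ is nonempty. Compounding this, Assumption~\ref{aaa} only furnishes Lipschitz continuity with probability $1 - a|\mathcal{U}|\exp(-L^2/b^2)$, so $L$ itself must be taken large enough (scaling with $\log(1/\delta)$) that this event, the discretization event, and the chosen-point event jointly hold with probability at least $1 - \delta$. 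Getting all three failure probabilities to close under a single $\delta$ while keeping $\beta_t$ logarithmic is the delicate bookkeeping at the heart of the continuous extension.
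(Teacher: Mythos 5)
Your proposal is correct and follows essentially the same route as the paper's proof: a grid with per-dimension granularity $\tau_t$ growing like $t^2$ so that the Lipschitz slack is $1/t^2$ while $\beta_t$ stays $\mathcal{O}(\log t)$, separate confidence bounds for the discretized inputs (union bound over $|\mathcal{D}_t|$) and for the selected queries (no domain cardinality), chaining through $[\mathbf{x}_\ast]_t$ to obtain $r_t \leq 2\beta_t^{1/2}\sum_{\mathcal{I}\in\mathcal{U}}\sigma_{t-1}^{\mathcal{I}}(\mathbf{x}_t^{\mathcal{I}}) + 1/t^2$, and then reusing the Theorem~\ref{theo:2} machinery so the discretization errors sum to $\pi^2/6$. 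The only cosmetic difference is your chosen-point lower bound, which invokes Minkowski subadditivity of the posterior standard deviation of $f$ with a union bound over $t$ alone, whereas the paper's Lemma~\ref{lem:3} bounds each factor $f_{\mathcal{I}}(\mathbf{x}_t^{\mathcal{I}})$ separately and union-bounds over pairs $(\mathcal{I},t)$; both are valid and give the same asymptotics.
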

Its proof is in\if\myproof1 Appendix~\ref{app:c}. \fi\if\myproof0 \cite{Nghia17}. \fi 
Theorem~\ref{theo:3} concludes our analysis for the infinite case which is similar to Theorem~\ref{theo:2} for the finite case: The upper bound on the cumulative regret $R_n$ is sub-linear in $n$, which guarantees that its average regret approaches zero in the limit. So,  DEC-HBO is asymptotically optimal with an arbitrarily high confidence. 
%
\section{Experiments and Discussion}
\label{exp}
This section empirically evaluates the performance of our DEC-HBO algorithm on an extensive benchmark comprising three synthetic functions: Shekel ($4$-dimensional), Hartmann ($6$-dimensional), Michalewicz ($10$-dimensional) (Section \ref{synthetic}), and two high-dimensional optimization problems involving hyperparameter tuning for popular machine learning models such as sparse GP \cite{Snelson07a} and convolutional neural network 
modeling two real-world datasets: 
Physicochemical properties of protein tertiary structure \cite{UCI_protein_data} and CIFAR-10 (Section \ref{hypertuning}).
%
\subsection{Optimizing Synthetic Functions}
\label{synthetic}
This section empirically compares the performance of DEC-HBO (with maximum factor size of $2$ (MF2) or $3$ (MF3)) with that of the state-of-the-art HBO algorithms like ADD-GP-UCB \cite{Kandasamy15}, ADD-MES-G and ADD-MES-R \cite{Wang17b}, and REMBO~\cite{Wang13} in optimizing the Shekel, Hartmann, and Michalewicz functions\if\myproof1 (Appendix~\ref{app:d}).\fi\if\myproof0 \cite{Nghia17}.\fi
\begin{figure}
	\begin{tabular}{cc}
		\includegraphics[width=3.79cm]{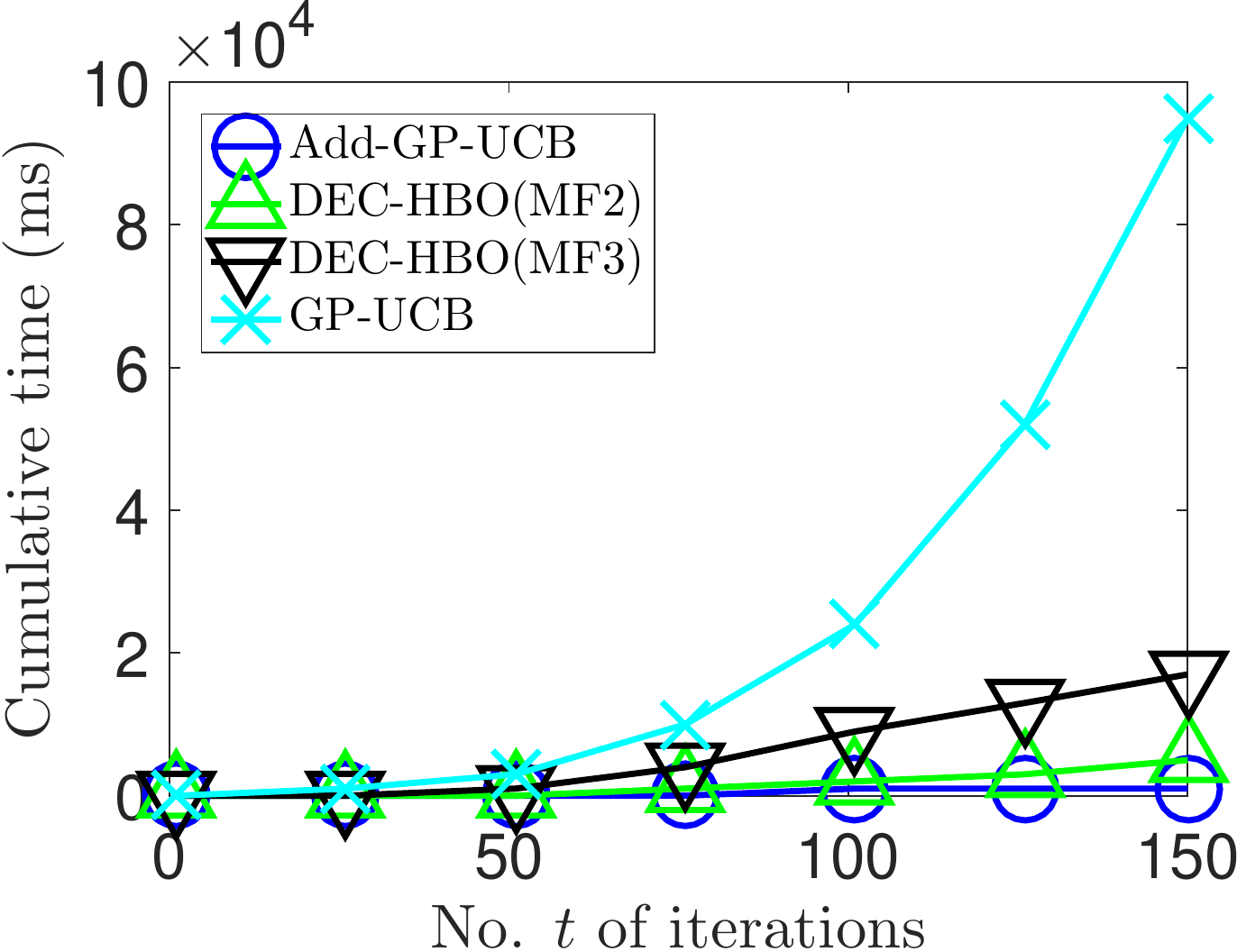} &
		\includegraphics[width=3.79cm]{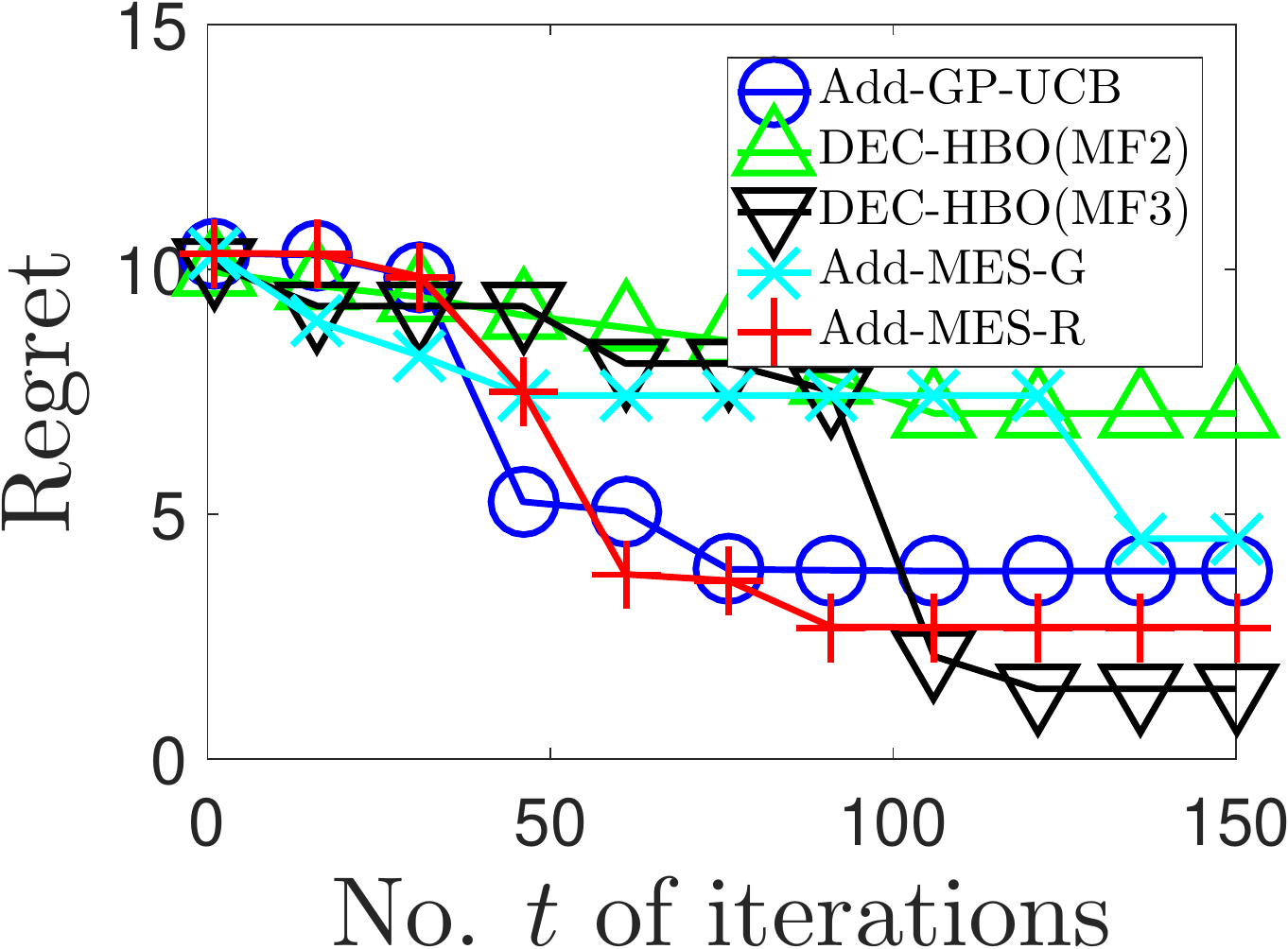} \\
		(a) & (b) \\
		\includegraphics[width=3.79cm]{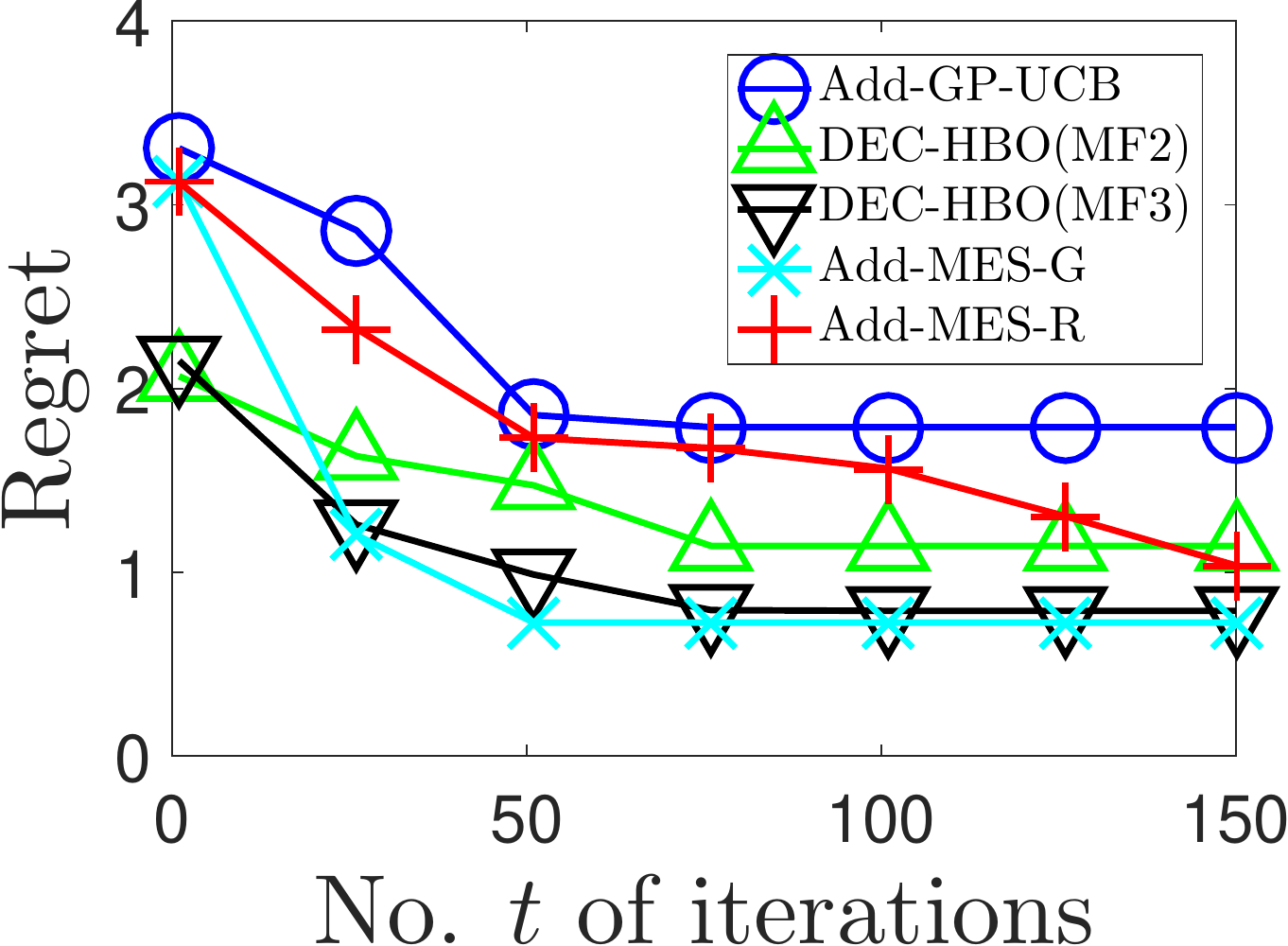} &
		\includegraphics[width=3.79cm]{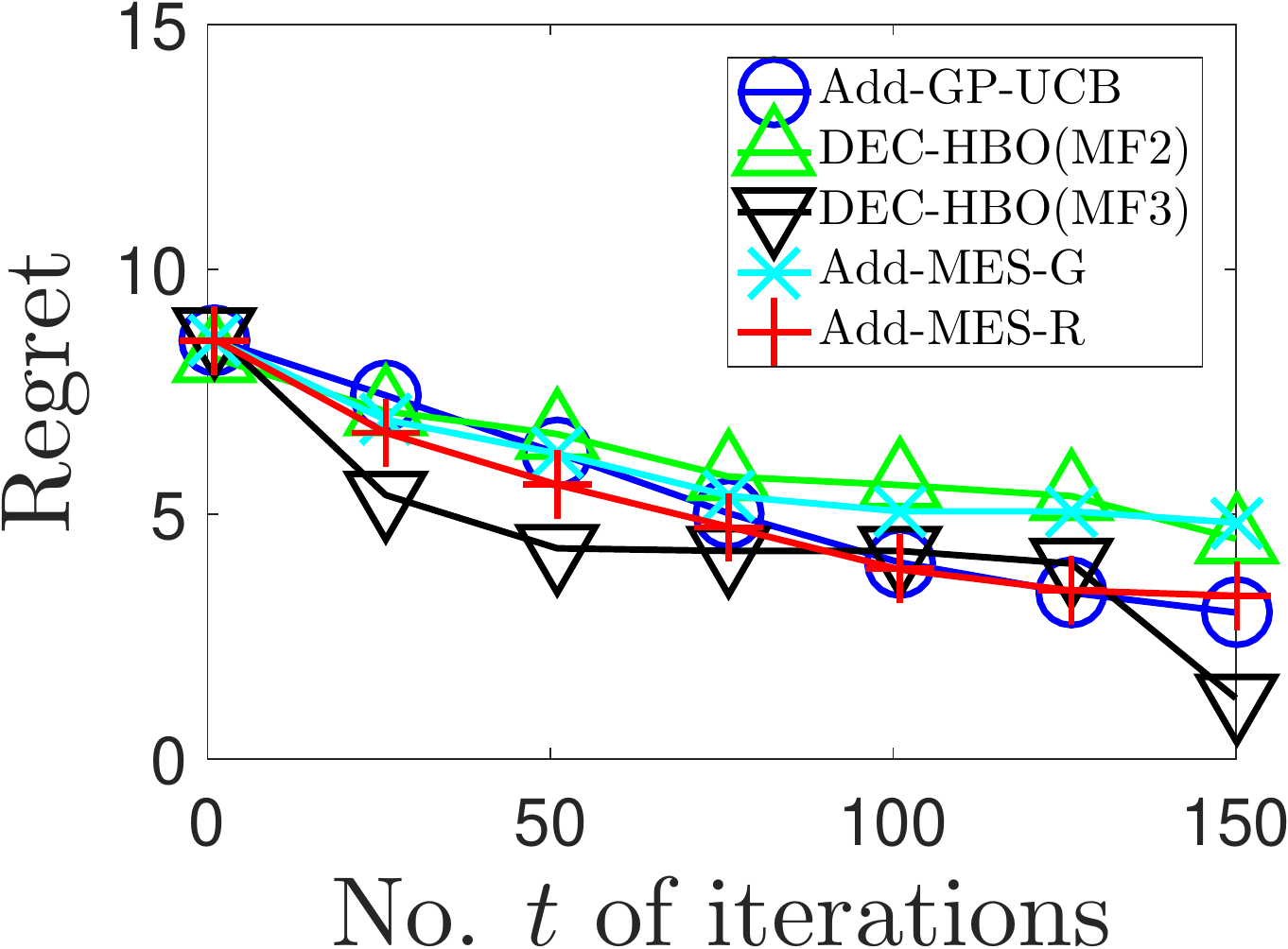} \\
		(c) & (d) 
	\end{tabular}
	\caption{(a) Graphs of cumulative incurred time of tested algorithms vs. no. $t$ of BO iterations for Shekel function, graphs of regret (i.e., $\min_{i=1}^t f(\mathbf{x}_i) - f(\mathbf{x}_\ast)$)
	achieved by tested HBO algorithms vs. no. $t$ of BO iterations for (b) Shekel, (c) Hartmann, and (d) Michalewicz functions.}
	\label{fig:synthetic}
\end{figure}

Fig. 1 reports results of the regret 
(i.e., $\min_{i=1}^t f(\mathbf{x}_i) - f(\mathbf{x}_\ast)$)
of the tested algorithms averaged over $5$ runs, each of which comprises $150$ iterations. For DEC-HBO (MF2) and DEC-HBO (MF3), each BO iteration involves $30$ iterations of max-sum.
For clarity, Table 1 further reports the final converged regrets achieved by the tested HBO algorithms  including REMBO\footnote{The performance of REMBO is not plotted in Figs. 1 and 2 to ease clutter as it requires much more iterations for convergence using the authors' implementation: \url{github.com/ziyuw/rembo}.}. The results show that in general, our DEC-HBO variants perform competitively with the other state-of-the-art HBO algorithms for all synthetic functions. Interestingly, it can also be observed that DEC-HBO (MF3) consistently outperforms DEC-HBO (MF2) and Add-GP-UCB (corresponding to the DEC-HBO variant with max. factor size of $1$) for all synthetic functions with input dimension $d \geq 4$. This highlights the importance of exploiting the interdependent effects of various input components on the output of $f$. In most cases, DEC-HBO (MF3) also outperforms Add-MES-G, Add-MES-R, and REMBO with the difference being most pronounced for the $10$-dimensional Michalewicz function. This further indicates the efficacy of DEC-HBO when applied to higher-dimensional optimization problems and asserts that preserving the interdependent effects of various input components on the output of $f$ is necessary. 
Fig.~\ref{fig:synthetic}a also shows the cumulative incurred time of GP-UCB, Add-GP-UCB, DEC-HBO (MF2), and DEC-HBO (MF3) in optimizing the Shekel function. The results reveal that DEC-HBO performs competitively in terms of time efficiency with Add-GP-UCB with minimal increase in incurred time over $150$ iterations in exchange for a significant improvement in terms of BO performance. In contrast, GP-UCB incurs much more time than our DEC-HBO variants, thus asserting the computational advantage of our decentralized optimization algorithm.
\begin{table}
	\centering
		\begin{tabular}{l|ccc}
			\textbf{HBO} & \textbf{Hartmann} & \textbf{Shekel} & \textbf{Michalewicz} \\ \hline 
			\textbf{MF2} & 1.1436 & 7.0538 & 4.4944 \\ 
			\textbf{MF3} & \textbf{0.7904} & \textbf{1.4295} & \textbf{1.2367} \\ 
			{Add-GP-UCB} & 1.7898 & 3.8338 & 2.9870 \\ 
			{Add-MES-G} & \textbf{0.7268} & 4.4951 & 4.8227 \\ 
			{Add-MES-R} & 1.0372 & 2.6858 & 3.3296 \\  
			{REMBO} & 1.5843 & 5.1677 & 4.0524 
		\end{tabular}
		\caption{Regrets achieved by tested HBO algorithms for Hartmann, Shekel, and Michalewicz functions.}
\label{table:lowdim}	
\end{table}
\subsection{Optimizing Hyperparameters of ML Models}
\label{hypertuning}
This section demonstrates the effectiveness of DEC-HBO in tuning the hyperparameters of two ML models like the sparse \emph{partially independent conditional} (PIC) approximation of GP model  \cite{Snelson07a} and \emph{convolutional neural network} (CNN).
The goal is to find the optimal configuration of (a) kernel hyperparameters and inducing inputs for which PIC predicts well for the physicochemical properties of protein tertiary structure dataset~\cite{UCI_protein_data} and (b) network hyperparameters for which CNN classifies well for the  CIFAR-10 dataset. 
These PIC and CNN hyperparameter tuning tasks are detailed as follows:

\subsubsection{PIC.} The PIC model is trained using the physicochemical properties of protein tertiary structure dataset~\cite{UCI_protein_data} which has $45730$ instances, each of which contains $\kappa=9$ attributes describing the physicochemical properties of a protein residue and its size (in armstrong) to be predicted. 
$95\%$ and $5\%$ of the dataset are used as training and test data, respectively. 
The training data is further divided into $5$ equal folds. The goal is to find a hyperparameter configuration that minimizes the \emph{root mean square error} (RMSE) of PIC's prediction on the test data. This is achieved via BO using the $5$-fold validation performance as a noisy estimate of the real performance on the test data. Specifically, for every input query of hyperparameters, the corresponding PIC model separately predicts on each of these folds (validation data), having trained on the remaining folds (effective training data). The averaged prediction error over these $5$ folds is then returned to the HBO algorithm to update the acquisition function for selecting the next input query of hyperparameters. Every such input query contains $2+ \kappa + \nu \times \kappa$ hyperparameters which include the signal and noise variances, $\kappa$ length-scales of the squared exponential kernel, and $\nu=200$ inducing inputs of dimension $\kappa$ each.
%
\subsubsection{CNN.} The CNN model is trained using the CIFAR-10 
object recognition dataset which has $50000$ training images and $10000$ test images, each of which belongs to one of the ten classes. $5000$ training images are set aside as the validation data. Similar to PIC, the goal is to find a hyperparameter configuration that minimizes the classification error of CNN on the test data, which is likewise achieved via BO using the performance on the validation data to estimate the real performance on the test data\footnote{We use the same CNN structure as the example code of keras: \url{github.com/fchollet/keras/} and replace the default optimizer in their code by \emph{stochastic gradient descent} (SGD).}. The six CNN hyperparameters to be optimized in our experiments include the learning rate of SGD in the range of $[10^{-5},1]$, three dropout rates in the range of $[0, 1]$, batch size in the range of $[100, 1000]$, and number of learning epochs in the range of $[100, 1000]$.

Fig. 2 shows results of the performance of DEC-HBO variants in comparison to that of ADD-GP-UCB for hyperparameter tuning of PIC and CNN trained with real-world datasets. 
Table 2 further reports the final converged RMSE achieved by the tested HBO algorithms including REMBO for PIC hyperparameter tuning\footnote{The performance of DEC-HBO is not compared with that of REMBO (implemented in MATLAB) for CNN hyperparameter tuning as the CNN code in keras cannot be converted to MATLAB.}. 
It can be observed that in general, our DEC-HBO variants outperform ADD-GP-UCB and REMBO. Interestingly, for PIC hyperparameter tuning, the performance difference is also more pronounced in the early BO iterations, which suggests that DEC-HBO excels in time-constrained high-dimensional optimization problems and preserving the interdependent effects of various input components on the output of $f$ boosts the performance of GP-UCB-based algorithms. This is consistent with our earlier observations in Section~\ref{synthetic}. The poor performance of REMBO as compared to DEC-HBO is expected since it only considers input hyperparameter queries generated from a random low-dimensional embedding of the input space, which severely restricts the expressiveness of PIC model.
\begin{figure}
	\begin{tabular}{cc}
		\includegraphics[width=3.79cm]{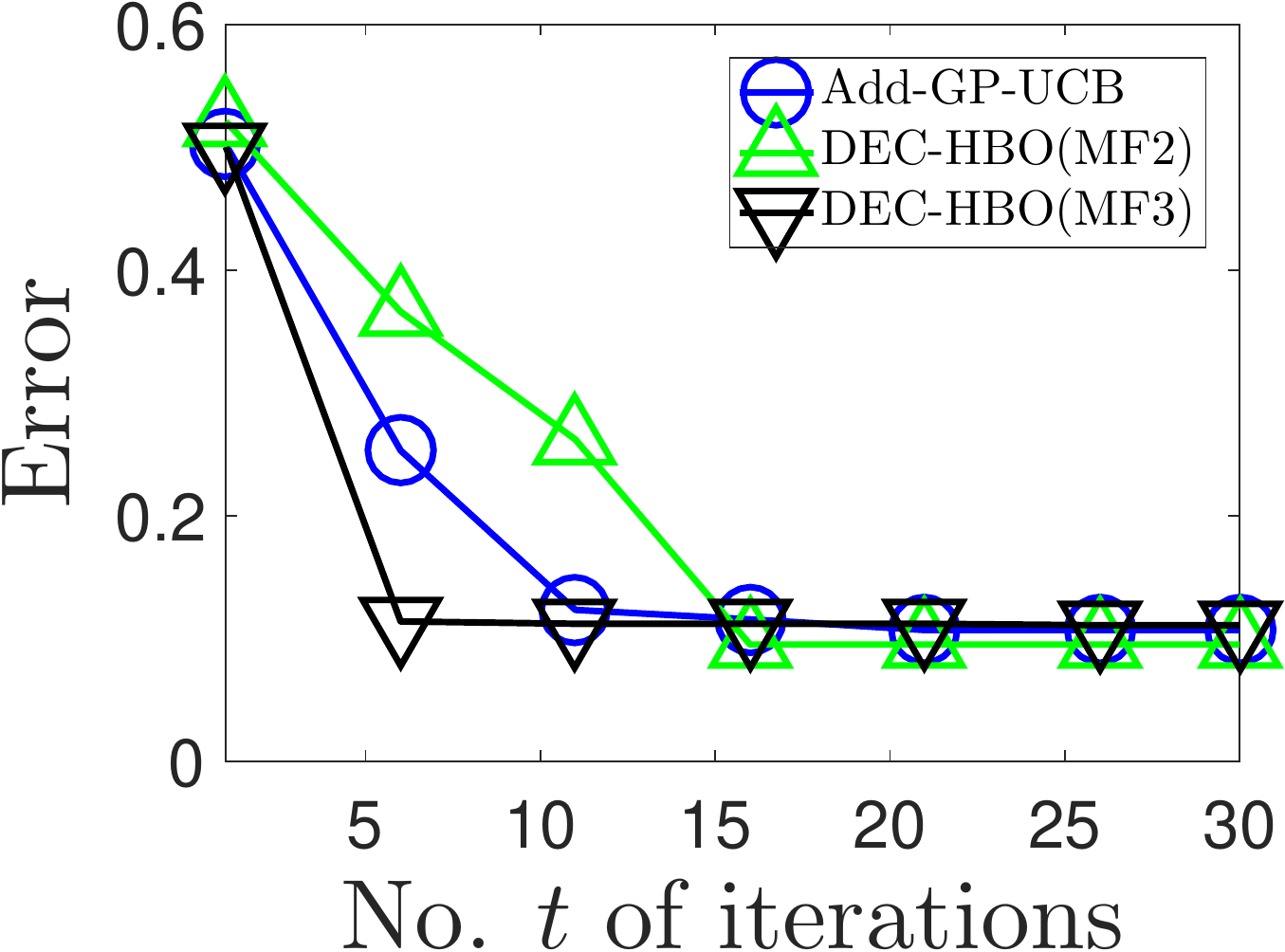} &
		\includegraphics[width=3.79cm]{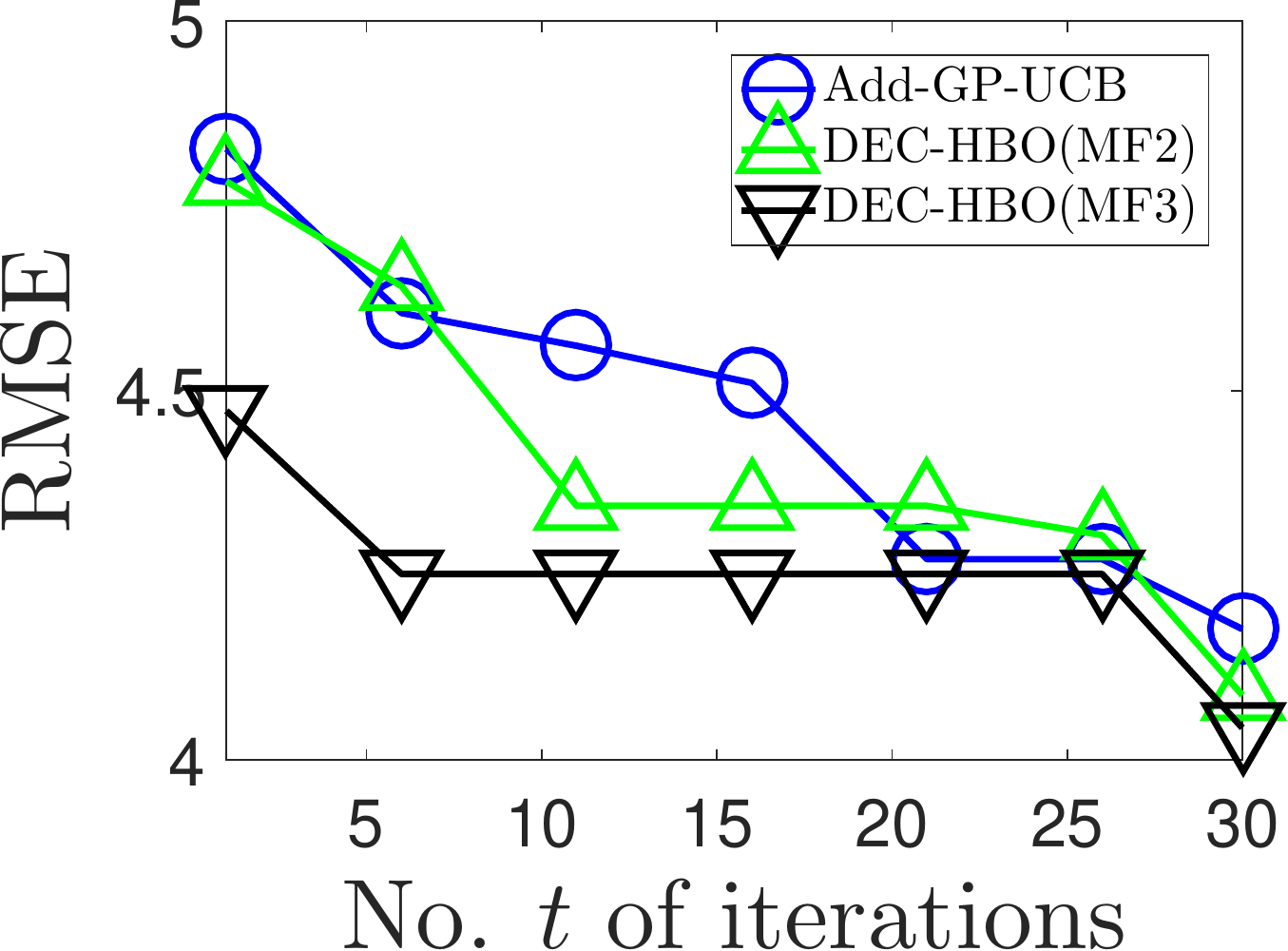} \\
		(a) & (b)
	\end{tabular}
	\caption{Graphs of (a) classification error of CNN and (b) RMSE of PIC's prediction vs. no. $t$ of BO iterations.}
	\label{fig:tuning}
\end{figure}
\section{Conclusion}
This paper describes a novel DEC-HBO algorithm that, in contrast to existing HBO algorithms, can exploit the interdependent effects of various input components on the output of the unknown objective function $f$ for boosting the BO performance and still preserve scalability in the number of input dimensions as well as guarantee no-regret performance asymptotically (see Remark in Section~\ref{discrete}).
To realize this, we propose a sparse yet rich factor graph representation of $f$ to be exploited for designing an acquisition function that can be similarly represented by a sparse factor graph and hence be efficiently optimized in a decentralized manner using a class of distributed message passing algorithms.
%
Empirical evaluation on both synthetic and real-world experiments show that our DEC-HBO algorithm performs competitively to the state-of-the-art centralized BO and HBO algorithms while providing a significant computational advantage for high-dimensional optimization problems.
For future work, we plan to generalize DEC-HBO to batch mode~\cite{Erik17} and the nonmyopic context by appealing to existing literature on nonmyopic BO~\cite{ling16} and active learning~\cite{LowAAMAS13,LowECML14b,NghiaICML14,LowAAMAS08,LowICAPS09,LowAAMAS11} 
as well as to be performed by a multi-robot team to find hotspots in environmental sensing/monitoring by seeking inspiration from existing literature on multi-robot active sensing/learning~\cite{LowRSS13,LowUAI12,LowTASE15,LowAAMAS12,LowAAMAS14}.
For applications with a huge budget of function evaluations, we like to couple DEC-HBO with the use of parallel/distributed~\cite{Chen13,HoangICML16,low15} and online/stochastic~\cite{NghiaICML15,MinhAAAI17,Xu2014} sparse GP models
to represent the belief of $f$ efficiently.
\begin{table}
\centering
	\begin{tabular}{l|cccc}
		\textbf{HBO}     & \textbf{MF2} & \textbf{MF3} & \textbf{Add-GP-UCB} & \textbf{REMBO}  \\ \hline 
		\textbf{PIC} & 4.0879       & 4.0437       & 4.1774 & 9.9100 \\  
		\textbf{CNN} & 0.0950        & 0.1107      & 0.1068 & - 
	\end{tabular}
	\caption{Minimum errors achieved by tested HBO algorithms for hyperparameter tuning of PIC and CNN.}
	\label{table:highdim}	
\end{table}

\subsubsection{Acknowledgments.} This research is supported by the National Research Foundation, Prime Minister's Office, Singapore under its Campus for Research Excellence and Technological Enterprise (CREATE) programme.

\bibliographystyle{aaai}
\bibliography{aaai18}

\begin{thebibliography}{}

\bibitem[\protect\citeauthoryear{Bergstra, Yamins, and Cox}{2013}]{Yamins13}
Bergstra, J.; Yamins, D.; and Cox, D.~D.
\newblock 2013.
\newblock Making a science of model search: Hyperparameter optimization and
  hundreds of dimensions for vision architectures.
\newblock In {\em Proc. {ICML}},  115--123.

\bibitem[\protect\citeauthoryear{Cao, Low, and Dolan}{2013}]{LowAAMAS13}
Cao, N.; Low, K.~H.; and Dolan, J.~M.
\newblock 2013.
\newblock Multi-robot informative path planning for active sensing of
  environmental phenomena: A tale of two algorithms.
\newblock In {\em Proc. {AAMAS}}.

\bibitem[\protect\citeauthoryear{Chen \bgroup et al\mbox.\egroup
  }{2012}]{LowUAI12}
Chen, J.; Low, K.~H.; Tan, C. K.-Y.; Oran, A.; Jaillet, P.; Dolan, J.~M.; and
  Sukhatme, G.~S.
\newblock 2012.
\newblock Decentralized data fusion and active sensing with mobile sensors for
  modeling and predicting spatiotemporal traffic phenomena.
\newblock In {\em Proc. UAI},  163--173.

\bibitem[\protect\citeauthoryear{Chen \bgroup et al\mbox.\egroup
  }{2013}]{Chen13}
Chen, J.; Cao, N.; Low, K.~H.; Ouyang, R.; Tan, C. K.-Y.; and Jaillet, P.
\newblock 2013.
\newblock Parallel {G}aussian process regression with low-rank covariance
  matrix approximations.
\newblock In {\em Proc. UAI},  152--161.

\bibitem[\protect\citeauthoryear{Chen \bgroup et al\mbox.\egroup
  }{2015}]{LowTASE15}
Chen, J.; Low, K.~H.; Jaillet, P.; and Yao, Y.
\newblock 2015.
\newblock Gaussian process decentralized data fusion and active sensing for
  spatiotemporal traffic modeling and prediction in mobility-on-demand systems.
\newblock {\em {IEEE} Trans. Autom. Sci. Eng.} 12:901--921.

\bibitem[\protect\citeauthoryear{Chen, Low, and Tan}{2013}]{LowRSS13}
Chen, J.; Low, K.~H.; and Tan, C. K.-Y.
\newblock 2013.
\newblock {Gaussian} process-based decentralized data fusion and active sensing
  for mobility-on-demand system.
\newblock In {\em Proc. {RSS}}.

\bibitem[\protect\citeauthoryear{Daxberger and Low}{2017}]{Erik17}
Daxberger, E., and Low, K.~H.
\newblock 2017.
\newblock Distributed batch {Gaussian} process optimization.
\newblock In {\em Proc. {ICML}},  951--960.

\bibitem[\protect\citeauthoryear{Djolonga, Krause, and
  Cevher}{2013}]{Djolonga13}
Djolonga, J.; Krause, A.; and Cevher, V.
\newblock 2013.
\newblock High-dimensional {G}aussian process bandits.
\newblock In {\em Proc. {NIPS}}.

\bibitem[\protect\citeauthoryear{Duvenaud, Nickisch, and
  Rasmussen}{2011}]{Duvenaud11}
Duvenaud, D.~K.; Nickisch, H.; and Rasmussen, C.~E.
\newblock 2011.
\newblock Additive {Gaussian} processes.
\newblock In {\em Proc. {NIPS}},  226--234.

\bibitem[\protect\citeauthoryear{Gardner \bgroup et al\mbox.\egroup
  }{2017}]{Garnett17}
Gardner, J.~R.; Guo, C.; Weinberger, K.~Q.; Garnett, R.; and Grosse, R.
\newblock 2017.
\newblock Discovering and exploiting additive structure for {Bayesian}
  optimization.
\newblock In {\em Proc. {AISTATS}}.

\bibitem[\protect\citeauthoryear{Gonz{\'{a}}lez \bgroup et al\mbox.\egroup
  }{2014}]{Gonzalez14}
Gonz{\'{a}}lez, J.; Longworth, J.; James, D.; and Lawrence, N.
\newblock 2014.
\newblock Bayesian optimization for synthetic gene design.
\newblock In {\em NIPS Workshop on {B}ayesian Optimization in Academia and
  Industry}.

\bibitem[\protect\citeauthoryear{Hennig and Schuler}{2012}]{Hennig12}
Hennig, P., and Schuler, C.~J.
\newblock 2012.
\newblock Entropy search for information-efficient global optimization.
\newblock {\em JMLR} 13:1809--1837.

\bibitem[\protect\citeauthoryear{Hoang \bgroup et al\mbox.\egroup
  }{2014a}]{LowECML14b}
Hoang, T.~N.; Low, K.~H.; Jaillet, P.; and Kankanhalli, M.
\newblock 2014a.
\newblock Active learning is planning: Nonmyopic $\epsilon$-{Bayes}-optimal
  active learning of {Gaussian} processes.
\newblock In {\em Proc. {ECML/PKDD Nectar Track}},  494--498.

\bibitem[\protect\citeauthoryear{Hoang \bgroup et al\mbox.\egroup
  }{2014b}]{NghiaICML14}
Hoang, T.~N.; Low, K.~H.; Jaillet, P.; and Kankanhalli, M.
\newblock 2014b.
\newblock Nonmyopic $\epsilon$-{B}ayes-optimal active learning of {Gaussian}
  processes.
\newblock In {\em Proc. ICML},  739--747.

\bibitem[\protect\citeauthoryear{Hoang, Hoang, and Low}{2015}]{NghiaICML15}
Hoang, T.~N.; Hoang, Q.~M.; and Low, K.~H.
\newblock 2015.
\newblock A unifying framework of anytime sparse {Gaussian} process regression
  models with stochastic variational inference for big data.
\newblock In {\em Proc. {ICML}},  569--578.

\bibitem[\protect\citeauthoryear{Hoang, Hoang, and Low}{2016}]{HoangICML16}
Hoang, T.~N.; Hoang, Q.~M.; and Low, K.~H.
\newblock 2016.
\newblock A distributed variational inference framework for unifying parallel
  sparse {Gaussian} process regression models.
\newblock In {\em Proc. ICML},  382--391.

\bibitem[\protect\citeauthoryear{Hoang, Hoang, and Low}{2017}]{MinhAAAI17}
Hoang, Q.~M.; Hoang, T.~N.; and Low, K.~H.
\newblock 2017.
\newblock A generalized stochastic variational {Bayesian} hyperparameter
  learning framework for sparse spectrum {Gaussian} process regression.
\newblock In {\em Proc. {AAAI}},  2007--2014.

\bibitem[\protect\citeauthoryear{Hornby \bgroup et al\mbox.\egroup
  }{2006}]{Hornby06}
Hornby, G.~S.; Globus, A.; Linden, D.~S.; and Lohn, J.~D.
\newblock 2006.
\newblock Automated antenna design with evolutionary algorithms.
\newblock In {\em Proc. {AIAA} Space Conference}.

\bibitem[\protect\citeauthoryear{Kandasamy, Schneider, and
  P\'{o}czos}{2015}]{Kandasamy15}
Kandasamy, K.; Schneider, J.; and P\'{o}czos, B.
\newblock 2015.
\newblock High dimensional {B}ayesian optimization and bandits via additive
  models.
\newblock In {\em Proc. {ICML}}.

\bibitem[\protect\citeauthoryear{Kandasamy, Schneider, and
  P\'{o}czos}{2016}]{Kandasamy15a}
Kandasamy, K.; Schneider, J.; and P\'{o}czos, B.
\newblock 2016.
\newblock High-dimensional {B}ayesian optimization and bandits via additive
  models.
\newblock {arXiv:1503.01673}.

\bibitem[\protect\citeauthoryear{Kr{\"{a}}henb{\"{u}}hl and
  Koltun}{2011}]{Krahenbuhl11}
Kr{\"{a}}henb{\"{u}}hl, P., and Koltun, V.
\newblock 2011.
\newblock Efficient inference in fully connected {CRFs} with {Gaussian} edge
  potentials.
\newblock In {\em Proc. {NIPS}}.

\bibitem[\protect\citeauthoryear{Leite, Enembreck, and
  Barth{\`{e}}s}{2014}]{Leite14}
Leite, A.~R.; Enembreck, F.; and Barth{\`{e}}s, J.-P.~A.
\newblock 2014.
\newblock Distributed constraint optimization problems: Review and
  perspectives.
\newblock {\em Expert Systems with Applications} 41:5139--5157.

\bibitem[\protect\citeauthoryear{Li \bgroup et al\mbox.\egroup }{2016}]{Li16}
Li, C.-L.; Kandasamy, K.; P\'{o}czos, B.; and Schneider, J.
\newblock 2016.
\newblock High dimensional {Bayesian} optimization via restricted projection
  pursuit models.
\newblock In {\em Proc. {AISTATS}}.

\bibitem[\protect\citeauthoryear{Ling, Low, and Jaillet}{2016}]{ling16}
Ling, C.~K.; Low, K.~H.; and Jaillet, P.
\newblock 2016.
\newblock {Gaussian} process planning with {Lipschitz} continuous reward
  functions: Towards unifying {Bayesian} optimization, active learning, and
  beyond.
\newblock In {\em Proc. {AAAI}},  1860--1866.

\bibitem[\protect\citeauthoryear{Low \bgroup et al\mbox.\egroup
  }{2012}]{LowAAMAS12}
Low, K.~H.; Chen, J.; Dolan, J.~M.; Chien, S.; and Thompson, D.~R.
\newblock 2012.
\newblock Decentralized active robotic exploration and mapping for
  probabilistic field classification in environmental sensing.
\newblock In {\em Proc. {AAMAS}},  105--112.

\bibitem[\protect\citeauthoryear{Low \bgroup et al\mbox.\egroup }{2015}]{low15}
Low, K.~H.; Yu, J.; Chen, J.; and Jaillet, P.
\newblock 2015.
\newblock Parallel {Gaussian} process regression for big data: Low-rank
  representation meets {Markov} approximation.
\newblock In {\em Proc. {AAAI}}.

\bibitem[\protect\citeauthoryear{Low, Dolan, and Khosla}{2008}]{LowAAMAS08}
Low, K.~H.; Dolan, J.~M.; and Khosla, P.
\newblock 2008.
\newblock Adaptive multi-robot wide-area exploration and mapping.
\newblock In {\em Proc. {AAMAS}},  23--30.

\bibitem[\protect\citeauthoryear{Low, Dolan, and Khosla}{2009}]{LowICAPS09}
Low, K.~H.; Dolan, J.~M.; and Khosla, P.
\newblock 2009.
\newblock Information-theoretic approach to efficient adaptive path planning
  for mobile robotic environmental sensing.
\newblock In {\em Proc. {ICAPS}}.

\bibitem[\protect\citeauthoryear{Low, Dolan, and Khosla}{2011}]{LowAAMAS11}
Low, K.~H.; Dolan, J.~M.; and Khosla, P.
\newblock 2011.
\newblock Active {Markov} information-theoretic path planning for robotic
  environmental sensing.
\newblock In {\em Proc. {AAMAS}},  753--760.

\bibitem[\protect\citeauthoryear{Naver{\v{s}}nik and Rojnik}{2012}]{Klemen12}
Naver{\v{s}}nik, K., and Rojnik, K.
\newblock 2012.
\newblock Handling input correlations in pharmacoeconomic models.
\newblock {\em Value in Health} 15:540--549.

\bibitem[\protect\citeauthoryear{Ouyang \bgroup et al\mbox.\egroup
  }{2014}]{LowAAMAS14}
Ouyang, R.; Low, K.~H.; Chen, J.; and Jaillet, P.
\newblock 2014.
\newblock Multi-robot active sensing of non-stationary {Gaussian} process-based
  environmental phenomena.
\newblock In {\em Proc. {AAMAS}}.

\bibitem[\protect\citeauthoryear{Rana}{2013}]{UCI_protein_data}
Rana, P.~S.
\newblock 2013.
\newblock Physicochemical properties of protein tertiary structure dataset.
\newblock \url{http://archive.ics.uci.edu/ml/datasets/}.

\bibitem[\protect\citeauthoryear{Rogers \bgroup et al\mbox.\egroup
  }{2011}]{Rogers11}
Rogers, A.; Farinelli, A.; Stranders, R.; and Jennings, N.~R.
\newblock 2011.
\newblock Bounded approximate decentralised coordination via the max-sum
  algorithm.
\newblock {\em AIJ} 175(2):730--759.

\bibitem[\protect\citeauthoryear{Shahriari \bgroup et al\mbox.\egroup
  }{2016}]{shahriari16}
Shahriari, B.; Swersky, K.; Wang, Z.; Adams, R.; and {de Freitas}, N.
\newblock 2016.
\newblock Taking the human out of the loop: A review of {Bayesian}
  optimization.
\newblock {\em Proceedings of the IEEE} 104(1):148--175.

\bibitem[\protect\citeauthoryear{Snelson and Ghahramani}{2007}]{Snelson07a}
Snelson, E.~L., and Ghahramani, Z.
\newblock 2007.
\newblock Local and global sparse {G}aussian process approximation.
\newblock In {\em Proc. AISTATS}.

\bibitem[\protect\citeauthoryear{Snoek, Hugo, and Adams}{2012}]{Snoek12}
Snoek, J.; Hugo, L.; and Adams, R.~P.
\newblock 2012.
\newblock Practical {B}ayesian optimization of machine learning algorithms.
\newblock In {\em Proc. {NIPS}},  2960--2968.

\bibitem[\protect\citeauthoryear{Srinivas \bgroup et al\mbox.\egroup
  }{2010}]{Srinivas10}
Srinivas, N.; Krause, A.; Kakade, S.; and Seeger, M.
\newblock 2010.
\newblock {G}aussian process optimization in the bandit setting: No regret and
  experimental design.
\newblock In {\em Proc. {ICML}},  1015--1022.

\bibitem[\protect\citeauthoryear{Wang and Jegelka}{2017}]{Wang17b}
Wang, Z., and Jegelka, S.
\newblock 2017.
\newblock Max-value entropy search for efficient {Bayesian} optimization.
\newblock In {\em Proc. {ICML}}.

\bibitem[\protect\citeauthoryear{Wang \bgroup et al\mbox.\egroup
  }{2013}]{Wang13}
Wang, Z.; Zoghi, M.; Hutter, F.; Matheson, D.; and {de Freitas}, N.
\newblock 2013.
\newblock {B}ayesian optimization in high dimensions via random embeddings.
\newblock In {\em Proc. {IJCAI}},  1778--1784.

\bibitem[\protect\citeauthoryear{Wang \bgroup et al\mbox.\egroup }{2017}]{Zi17}
Wang, Z.; Li, C.; Jegelka, S.; and Kohli, P.
\newblock 2017.
\newblock Batched high-dimensional {B}ayesian optimization via structural
  kernel learning.
\newblock In {\em Proc. {ICML}},  3656--3664.

\bibitem[\protect\citeauthoryear{Xu \bgroup et al\mbox.\egroup }{2014}]{Xu2014}
Xu, N.; Low, K.~H.; Chen, J.; Lim, K.~K.; and Ozgul, E.~B.
\newblock 2014.
\newblock {GP-Localize}: Persistent mobile robot localization using online
  sparse {Gaussian} process observation model.
\newblock In {\em Proc. AAAI},  2585--2592.

\end{thebibliography}

\if \myproof1

\appendix
\section{Time Complexity Analysis}
\label{app:e}
Let the numbers of BO iterations and decentralized optimization iterations executed by our DEC-HBO algorithm be $N_o$ and $N_m$, respectively. In BO iteration $t$, there are $|\set{U}|$ factor nodes and $d$ variable nodes operating independently. The time complexity per BO iteration for each type of node is detailed below:
\subsubsection{Factor Node.} In BO iteration $t$, the input space of factor $\mathcal{I}$ is discretized into a grid of size $\mathcal{D}_t^{|\mathcal{I}|}$. The corresponding factor function $\varphi^{\mathcal{I}}_t(\mathbf{x}^{\mathcal{I}})$~\eqref{eq:11} then needs to be evaluated over $\mathcal{D}_t^{|\mathcal{I}|}$ discretized inputs $\mathbf{x}^{\mathcal{I}}$. For each input $\mathbf{x}^{\mathcal{I}}$, the time complexity is $\mathcal{O}(t^3 |\mathcal{I}|)$ since the number of past function evaluations is $\mathcal{O}(t)$ (i.e., $t_0 + t - 1$ where $t_0$ is a constant number of function evaluations prior to running the BO algorithm). Thus, precomputing $\varphi^{\mathcal{I}}_t(\mathbf{x}^{\mathcal{I}})$ for all inputs $\mathbf{x}^\mathcal{I}$ incurs a total of $\mathcal{O}(\mathcal{D}_t^{|\mathcal{I}|}  t^3  |\mathcal{I}|)$ time. Once the precomputation is cached,  computing each message $m_{\varphi_t^{\mathcal{I}} \rightarrow \mathbf{x}^{(i)}}(h)$ incurs $\mathcal{O}(|\mathcal{I}| \mathcal{D}_t^{|\mathcal{I}| - 1})$ time~\eqref{eq:12}. Since there are $|\mathcal{I}| \times \mathcal{D}^1_t$ such messages, the message computation (including factor precomputation) at this factor node over $N_m$ max-sum iterations incurs a total of $\mathcal{O}(N_m |\mathcal{I}|^2 \mathcal{D}_t^{|\mathcal{I}|} + \mathcal{D}_t^{|\mathcal{I}|}  t^3  |\mathcal{I}|) = \mathcal{O}(N_m \mathcal{D}_t^{|\mathcal{I}|}  |\mathcal{I}|  (t^3 + |\mathcal{I}|))$ time. At the end of the message iteration phase, each variable node sends its latest update of $m_{\mathbf{x}^{(i)}\rightarrow \varphi_t^{\mathcal{I}}}(h)$ to an arbitrary factor $\mathcal{I}$ in its neighborhood. The receiving factor then uses~\eqref{eq:13} to generate the optimal value for $\mathbf{x}^{(i)}$, which incurs $\mathcal{O}(\mathcal{D}_t ^{|\mathcal{I}|})$ time per BO iteration. We assume the cost of sending and receiving messages between nodes are negligible and therefore omit them from the analysis for clarity. 
\subsubsection{Variable Node.} For each decentralized optimization iteration, computing each message at a variable node incurs $\mathbf{x}^{(i)}$ is $\mathcal{O}(|\mathcal{A}(i)|)$ time. Since there are $|\mathcal{A}(i)| \times \mathcal{D}^1_t$ such messages, the message computation at this variable node over $N_m$ iterations then incurs $\mathcal{O}(N_m |\mathcal{A}(i)|^2 \mathcal{D}^1_t)$ time. 
\section{Selecting Factor Graph Representation}
\label{fgr}
The formulation of our distributed message passing algorithm in Section~\ref{dBO} requires a specification of the input partition $\mathcal{U} \subseteq 2^{\mathcal{S}}$ that underlies our additive acquisition function~\eqref{eq:11}. This can be either specified manually by inspecting the data \cite{Kandasamy15} or learned from data~\cite{Garnett17,Zi17}. We adopt the recent approach of~\citeauthor{Garnett17}~\shortcite{Garnett17} by associating each factor graph candidate with an equivalent kernel of the resulting additive GP~\cite{Duvenaud11}. This allows~\eqref{eq:11} to be reformulated as a weighted average with respect to the posterior of $\mathcal{U}$ given the data $\mathfrak{D}_t\triangleq\{(\mathbf{x}_i,y_i) \}_{i=1,\ldots,t}$ of selected input queries and their noisy  outputs observed from evaluating $f$ after $t$ iterations:
\begin{equation}
\begin{array}{c} 
\displaystyle\sum_\mathcal{U} p(\mathcal{U} | \mathfrak{D}_t) \sum_{\mathcal{I}\in\mathcal{U}}\varphi^{\mathcal{I}}_t(\mathbf{x}^{\mathcal{I}})
\approx k^{-1}\sum_{i=1}^k \sum_{\mathcal{I}\in\mathcal{U}_i}\varphi^{\mathcal{I}}_t(\mathbf{x}^{\mathcal{I}}) 
\end{array}
\label{eq:13a}
\end{equation} 
where $\mathcal{U}_1,\ldots,\mathcal{U}_k$ are i.i.d samples drawn from $p(\mathcal{U} | \mathfrak{D}_t)$ via MCMC sampling \cite{Garnett17}. Interestingly, the RHS of~\eqref{eq:13a} can be equivalently interpreted as a sum of augmented local acquisition functions 
$k^{-1}\varphi_t^{\mathcal{I}}(\mathbf{x})$ induced from subsets $\mathcal{I}$ of input components in the union of input partitions $\mathcal{U}_1 \cup \ldots \cup \mathcal{U}_k$. As such, the augmented additive acquisition function~\eqref{eq:13a} 
can similarly be efficiently optimized in a decentralized manner using our distributed message passing algorithm in Section~\ref{dBO}.
It also allows the interdependent effects of different input components on the output of $f$ to be discovered and simultaneously exploited to find the global maximizer~\cite{Garnett17}.
%
\section{Proof of Theorem~\ref{theo:1}}
\label{app:a}
To prove Theorem~\ref{theo:1}, we first establish the following results:
\begin{lemma}
\label{lem:1}
Given $\delta \in (0, 1)$, let $\beta_{t} \triangleq 2\log(|\mathcal{D}||\mathcal{U}|\pi_{t}/\delta)$ with $\pi_t = \pi^2t^2/6$. Then, with probability of at least $1 - \delta$,
$$
\left|f(\mathbf{x}) - \sum_{\mathcal{I} \in \mathcal{U}} \mu^\mathcal{I}_{t-1}(\mathbf{x}^\mathcal{I})\right| \leq \beta_t^{1/2}\sum_{\mathcal{I} \in \mathcal{U}} \sigma_{t-1}^\mathcal{I}(\mathbf{x}^\mathcal{I}) 
$$
for all $\mathbf{x}\in\mathcal{D}$ and $t \in\mathbb{N}$ where $\mu_{t-1}^\mathcal{I}(\mathbf{x}^\mathcal{I})$ and $\sigma_{t-1}^\mathcal{I}(\mathbf{x}^\mathcal{I})$ are previously defined in~\eqref{eq:9}.
\end{lemma}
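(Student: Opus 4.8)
The plan is to reduce Lemma~\ref{lem:1} to the standard Gaussian confidence-bound argument of~\citeauthor{Srinivas10}~\shortcite{Srinivas10}, but applied factor-by-factor rather than to $f$ directly, and then reassembled via the triangle inequality. The crucial input I would use is that, by Assumption 2 together with the posterior derivation culminating in~\eqref{eq:9}, the marginal posterior of each latent factor value $f_\mathcal{I}(\mathbf{x}^\mathcal{I})$ given the observations $\mathbf{y}$ is \emph{exactly} the Gaussian $\mathcal{N}(\mu_{t-1}^\mathcal{I}(\mathbf{x}^\mathcal{I}), \sigma_{t-1}^\mathcal{I}(\mathbf{x}^\mathcal{I})^2)$. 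This marginal Gaussianity is what lets me control each summand of $f(\mathbf{x}) = \sum_{\mathcal{I}\in\mathcal{U}} f_\mathcal{I}(\mathbf{x}^\mathcal{I})$ separately, and it is precisely why the final bound features the sum of the per-factor standard deviations rather than the standard deviation of their sum.

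First I would fix a single iteration $t$, a single input $\mathbf{x}\in\mathcal{D}$, and a single factor $\mathcal{I}\in\mathcal{U}$, and invoke the elementary Gaussian tail estimate: for $Z\sim\mathcal{N}(0,1)$ one has $\mathrm{Pr}(|Z| > c)\leq e^{-c^2/2}$. Setting $c = \beta_t^{1/2}$ and standardizing $f_\mathcal{I}(\mathbf{x}^\mathcal{I})$ by its posterior mean and standard deviation gives
\[
\mathrm{Pr}\!\left(|f_\mathcal{I}(\mathbf{x}^\mathcal{I}) - \mu_{t-1}^\mathcal{I}(\mathbf{x}^\mathcal{I})| > \beta_t^{1/2}\,\sigma_{t-1}^\mathcal{I}(\mathbf{x}^\mathcal{I})\right) \leq e^{-\beta_t/2}.
\]
Next I would take a union bound over all $\mathbf{x}\in\mathcal{D}$, all $\mathcal{I}\in\mathcal{U}$, and all $t\in\mathbb{N}$. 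The choice $\beta_t = 2\log(|\mathcal{D}||\mathcal{U}|\pi_t/\delta)$ is engineered exactly for this purpose: it yields $e^{-\beta_t/2} = \delta/(|\mathcal{D}||\mathcal{U}|\pi_t)$, so the $|\mathcal{D}||\mathcal{U}|$ events at iteration $t$ contribute a total failure probability of $\delta/\pi_t$, and summing over $t$ gives $\delta\sum_t \pi_t^{-1} = \delta\,(6/\pi^2)\sum_t t^{-2} = \delta$ since $\pi_t = \pi^2 t^2/6$. Hence, with probability at least $1-\delta$, the per-factor bound $|f_\mathcal{I}(\mathbf{x}^\mathcal{I}) - \mu_{t-1}^\mathcal{I}(\mathbf{x}^\mathcal{I})| \leq \beta_t^{1/2}\sigma_{t-1}^\mathcal{I}(\mathbf{x}^\mathcal{I})$ holds simultaneously for every $\mathbf{x}$, $\mathcal{I}$, and $t$.

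Finally, on this high-probability event I would sum over $\mathcal{I}\in\mathcal{U}$ and apply the triangle inequality. Using $f(\mathbf{x}) = \sum_{\mathcal{I}}f_\mathcal{I}(\mathbf{x}^\mathcal{I})$ from Assumption 1, I have $f(\mathbf{x}) - \sum_{\mathcal{I}}\mu_{t-1}^\mathcal{I}(\mathbf{x}^\mathcal{I}) = \sum_{\mathcal{I}}\left(f_\mathcal{I}(\mathbf{x}^\mathcal{I}) - \mu_{t-1}^\mathcal{I}(\mathbf{x}^\mathcal{I})\right)$, whose absolute value is at most $\sum_\mathcal{I}\beta_t^{1/2}\sigma_{t-1}^\mathcal{I}(\mathbf{x}^\mathcal{I}) = \beta_t^{1/2}\sum_\mathcal{I}\sigma_{t-1}^\mathcal{I}(\mathbf{x}^\mathcal{I})$, which is exactly the claim. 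Note that neither the union bound nor the triangle inequality requires the factor posteriors to be mutually independent, so the (generally nonzero) cross-factor correlations are harmless here. I do not anticipate a serious technical obstacle; the one point warranting genuine care is justifying the exact marginal Gaussianity of each $f_\mathcal{I}(\mathbf{x}^\mathcal{I})$ under the \emph{joint} GP posterior of~\eqref{eq:9}, since that is what legitimizes the per-factor tail bound and is precisely the step that sidesteps the flawed variance-of-the-sum comparison flagged in the Remark.
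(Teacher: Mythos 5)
Your proposal is correct and follows essentially the same route as the paper's own proof: the exact per-factor posterior Gaussianity $f_\mathcal{I}(\mathbf{x}^\mathcal{I}) \sim \mathcal{N}(\mu_{t-1}^\mathcal{I}(\mathbf{x}^\mathcal{I}), \sigma_{t-1}^\mathcal{I}(\mathbf{x}^\mathcal{I})^2)$, the standard Gaussian tail bound at level $\beta_t^{1/2}$, a union bound over all tuples $(\mathbf{x}, \mathcal{I}, t)$ whose failure probabilities sum to $\delta$ by the engineered choice of $\beta_t$, and a final summation over $\mathcal{I} \in \mathcal{U}$ (the paper adds the two one-sided inequalities, which is equivalent to your triangle-inequality step). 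Your closing observation that neither step requires independence across factors, and that the marginal Gaussianity is what sidesteps the flawed variance-of-the-sum comparison, matches the paper's Remark precisely.
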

\begin{proof}
For all $\mathbf{x} \in \mathcal{D}, \mathcal{I} \in \mathcal{U}$, and $t \in\mathbb{N}$,
\begin{equation}
f_\mathcal{I}(\mathbf{x}^\mathcal{I}) \sim \mathcal{N}(\mu_{t-1}^\mathcal{I}(\mathbf{x}^\mathcal{I}), \sigma_{t-1}^\mathcal{I}(\mathbf{x}^\mathcal{I})^2) \ .
\label{eq:a2}
\end{equation}
Let $r \triangleq (f_\mathcal{I}(\mathbf{x}^\mathcal{I}) - \mu_{t-1}^\mathcal{I}(\mathbf{x}^\mathcal{I})) / \sigma_{t-1}^\mathcal{I}(\mathbf{x}^\mathcal{I})$. Then,~\eqref{eq:a2} implies $r \sim \mathcal{N}(0, 1)$ and hence, $\mathrm{Pr}(|r| \leq \beta_t^{1/2}) \geq 1 - \mathrm{exp}(-\beta_t/2)$. That is,
$$
\mathrm{Pr}\left(|f_\mathcal{I}(\mathbf{x}^\mathcal{I}) - \mu_{t-1}^\mathcal{I}(\mathbf{x}^\mathcal{I})| \leq \beta^{1/2}_t\sigma_{t-1}^\mathcal{I}(\mathbf{x}^\mathcal{I})\right) \geq 1 - \exp(-\beta_t/2)
$$ 
which, by applying the union bound over all tuples $(\mathbf{x} \in \mathcal{D}, \mathcal{I} \in \mathcal{U}, t \in\mathbb{N})$, implies
$$
\begin{array}{l}
\displaystyle\mathrm{Pr}\left(\forall \mathbf{x},\mathcal{I}, t \in\mathbb{N}\ \ 
|f_\mathcal{I}(\mathbf{x}^\mathcal{I}) - \mu_{t-1}^\mathcal{I}(\mathbf{x}^\mathcal{I})| \leq \beta^{1/2}_t\sigma_{t-1}^\mathcal{I}(\mathbf{x}^\mathcal{I})\right) \\
\displaystyle\geq 1 - |\mathcal{D}||\mathcal{U}|\sum_{t=1}^{\infty}\exp(-\beta_t/2) = 1 - \delta \ .
\end{array} 
$$
This means with probability of at least $1 - \delta$, the following inequalities hold simultaneously for all tuples $(\mathbf{x},\mathcal{I}, t)$:
\begin{equation}
\begin{array}{rcl}
f_\mathcal{I}(\mathbf{x}^\mathcal{I}) &\leq& \mu_{t-1}^\mathcal{I}(\mathbf{x}^\mathcal{I}) + \beta_t^{1/2}\sigma_{t-1}^\mathcal{I}(\mathbf{x}^\mathcal{I}) \ ,\\
f_\mathcal{I}(\mathbf{x}^\mathcal{I}) &\geq& \mu_{t-1}^\mathcal{I}(\mathbf{x}^\mathcal{I}) - \beta_t^{1/2}\sigma_{t-1}^\mathcal{I}(\mathbf{x}^\mathcal{I}) \ . 
\label{eq:a6}
\end{array}
\end{equation}
Summing over $\mathcal{I} \in \mathcal{U}$ on both sides of the above inequalities yields
$$
\hspace{-1.7mm}
\begin{array}{l}
\displaystyle f(\mathbf{x}) = \sum_{\mathcal{I} \in \mathcal{U}}f_\mathcal{I}(\mathbf{x}^\mathcal{I}) \leq \sum_{\mathcal{I} \in \mathcal{U}}\mu_{t-1}^\mathcal{I}(\mathbf{x}^\mathcal{I}) 
+ \beta_t^{1/2}\sum_{\mathcal{I}\in\mathcal{U}}\sigma_{t-1}^\mathcal{I}(\mathbf{x}^\mathcal{I}) \ ,\\
\displaystyle f(\mathbf{x})  =\sum_{\mathcal{I} \in \mathcal{U}}f_\mathcal{I}(\mathbf{x}^\mathcal{I}) \geq \sum_{\mathcal{I} \in \mathcal{U}}\mu_{t-1}^\mathcal{I}(\mathbf{x}^\mathcal{I}) - \beta_t^{1/2}\sum_{\mathcal{I}\in\mathcal{U}}\sigma_{t-1}^\mathcal{I}(\mathbf{x}^\mathcal{I}) \ .
\end{array}
$$
That is, for all pairs of $(\mathbf{x}, t)$,
\begin{equation}
\displaystyle\left|f(\mathbf{x}) - \sum_{\mathcal{I} \in \mathcal{U}} \mu^\mathcal{I}_{t-1}(\mathbf{x}^\mathcal{I})\right| \leq \beta_t^{1/2}\sum_{\mathcal{I} \in \mathcal{U}} \sigma_{t-1}^\mathcal{I}(\mathbf{x}^\mathcal{I}) \ .
\label{eq:a9}
\end{equation}
Since~\eqref{eq:a6} holds simultaneously for all tuples $(\mathcal{I},\mathbf{x}, t)$ with probability of at least $1 - \delta$,~\eqref{eq:a9} also holds simultaneously for all pairs of $(\mathbf{x}, t)$ with probability of at least $1 - \delta$. 
\end{proof}
\begin{lemma}
\label{lem:2}
For all $t \in\mathbb{N}$, if 
\begin{equation}
\displaystyle\left|f(\mathbf{x}) - \sum_{\mathcal{I}\in\mathcal{U}}\mu_{t-1}^\mathcal{I}(\mathbf{x}^\mathcal{I})\right| \leq \beta_t^{1/2}\sum_{\mathcal{I} \in \mathcal{U}}\sigma_{t-1}^\mathcal{I}(\mathbf{x}^\mathcal{I})  
\label{eq:a10}
\end{equation}
for all $\mathbf{x} \in \mathcal{D}$, then $r_t \leq 2\beta_t^{1/2}\sum_{\mathcal{I} \in\mathcal{U}}\sigma_{t-1}^\mathcal{I}(\mathbf{x}_t^\mathcal{I})$.
\end{lemma}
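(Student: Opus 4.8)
The plan is to carry out the standard confidence-bound regret argument adapted to the additive acquisition function, exploiting that the hypothesis~\eqref{eq:a10} supplies a \emph{two-sided} envelope around $f$: an upper bound $f(\mathbf{x}) \le \sum_{\mathcal{I}\in\mathcal{U}}\mu_{t-1}^\mathcal{I}(\mathbf{x}^\mathcal{I}) + \beta_t^{1/2}\sum_{\mathcal{I}\in\mathcal{U}}\sigma_{t-1}^\mathcal{I}(\mathbf{x}^\mathcal{I}) = \sum_{\mathcal{I}\in\mathcal{U}}\varphi_{t-1}^\mathcal{I}(\mathbf{x}^\mathcal{I})$ together with the matching lower bound obtained by flipping the sign of the $\beta_t^{1/2}$ term. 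The idea is to apply the upper bound at the global maximizer $\mathbf{x}_\ast$ and the lower bound at the query $\mathbf{x}_t$ actually selected, then bridge the two evaluation points using the defining property of $\mathbf{x}_t$.

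Concretely I would proceed in four steps. First, instantiate~\eqref{eq:a10} at $\mathbf{x}=\mathbf{x}_\ast$ to obtain $f(\mathbf{x}_\ast) \le \sum_{\mathcal{I}\in\mathcal{U}}\varphi_{t-1}^\mathcal{I}(\mathbf{x}_\ast^\mathcal{I})$. Second, invoke the selection rule of DEC-HBO, namely that $\mathbf{x}_t \triangleq \argmax_{\mathbf{x}\in\mathcal{D}}\sum_{\mathcal{I}\in\mathcal{U}}\varphi_{t-1}^\mathcal{I}(\mathbf{x}^\mathcal{I})$ (established in Section~\ref{dBO}), so that $\sum_{\mathcal{I}\in\mathcal{U}}\varphi_{t-1}^\mathcal{I}(\mathbf{x}_\ast^\mathcal{I}) \le \sum_{\mathcal{I}\in\mathcal{U}}\varphi_{t-1}^\mathcal{I}(\mathbf{x}_t^\mathcal{I})$; chaining these two inequalities gives $f(\mathbf{x}_\ast) \le \sum_{\mathcal{I}\in\mathcal{U}}\mu_{t-1}^\mathcal{I}(\mathbf{x}_t^\mathcal{I}) + \beta_t^{1/2}\sum_{\mathcal{I}\in\mathcal{U}}\sigma_{t-1}^\mathcal{I}(\mathbf{x}_t^\mathcal{I})$. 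Third, instantiate the lower-bound half of~\eqref{eq:a10} at $\mathbf{x}=\mathbf{x}_t$ to get $f(\mathbf{x}_t) \ge \sum_{\mathcal{I}\in\mathcal{U}}\mu_{t-1}^\mathcal{I}(\mathbf{x}_t^\mathcal{I}) - \beta_t^{1/2}\sum_{\mathcal{I}\in\mathcal{U}}\sigma_{t-1}^\mathcal{I}(\mathbf{x}_t^\mathcal{I})$. Fourth, subtract: in $r_t = f(\mathbf{x}_\ast)-f(\mathbf{x}_t)$ the posterior-mean sums $\sum_{\mathcal{I}\in\mathcal{U}}\mu_{t-1}^\mathcal{I}(\mathbf{x}_t^\mathcal{I})$ cancel, leaving $r_t \le 2\beta_t^{1/2}\sum_{\mathcal{I}\in\mathcal{U}}\sigma_{t-1}^\mathcal{I}(\mathbf{x}_t^\mathcal{I})$, which is the claimed bound.

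The algebra is a one-line cancellation, so the only substantive content — and the step I would treat most carefully — is the optimality bridge in the second step. It presupposes that the decentralized message-passing/DP procedure of Section~\ref{dBO} genuinely returns a \emph{global} maximizer of the \emph{exact} additive objective $\sum_{\mathcal{I}\in\mathcal{U}}\varphi_{t-1}^\mathcal{I}$ over $\mathcal{D}$ (guaranteed at least when the factor graph is a tree), and that the function being maximized coincides term-for-term with the upper envelope furnished by~\eqref{eq:a10}. Given that correspondence, no property of the GP posterior beyond the envelope in the hypothesis is needed: the lemma is a deterministic consequence of~\eqref{eq:a10} and the selection rule, with all randomness already absorbed into the conditioning event under which~\eqref{eq:a10} is assumed to hold.
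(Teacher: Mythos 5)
Your proposal is correct and follows essentially the same argument as the paper's proof: apply the upper half of~\eqref{eq:a10} at $\mathbf{x}_\ast$, bridge to $\mathbf{x}_t$ via the selection rule $\mathbf{x}_t = \argmax_{\mathbf{x}\in\mathcal{D}}\sum_{\mathcal{I}\in\mathcal{U}}\varphi^\mathcal{I}(\mathbf{x}^\mathcal{I})$, apply the lower half of~\eqref{eq:a10} at $\mathbf{x}_t$, and cancel the posterior-mean sums to obtain the factor of $2$. Your closing caveat about the decentralized optimizer returning an exact global maximizer is a fair observation but is outside the lemma itself, which (as you note) treats the argmax property as given.
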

\begin{proof} 
By definition, 
$\mathbf{x}_{t} = \argmax_{\mathbf{x}\in\mathcal{D}} \sum_{\mathcal{I}}\varphi_t^\mathcal{I}(\mathbf{x}^\mathcal{I})$
and hence, 
$\sum_{\mathcal{I}}\varphi_t^\mathcal{I}(\mathbf{x}^\mathcal{I}_t)\geq\sum_{\mathcal{I}}\varphi_t^\mathcal{I}(\mathbf{x}^\mathcal{I}_*)$. 
This implies
$$
\begin{array}{l}
\displaystyle\sum_{\mathcal{I}\in\mathcal{U}} \mu_{t-1}^\mathcal{I}(\mathbf{x}_t^\mathcal{I}) + \beta_t^{1/2}\sigma_{t-1}^\mathcal{I}(\mathbf{x}_t^\mathcal{I}) \\ 
\geq\displaystyle\sum_{\mathcal{I}\in\mathcal{U}} \mu_{t-1}^\mathcal{I}(\mathbf{x}_\ast^\mathcal{I}) + \beta_t^{1/2}\sigma_{t-1}^\mathcal{I}(\mathbf{x}_\ast^\mathcal{I})\\
\geq f(\mathbf{x}_\ast)  
\end{array}
$$
where the second inequality follows directly from~\eqref{eq:a10}. 
Then, 
\begin{equation}
\begin{array}{rcl}
r_t &=& f(\mathbf{x}_\ast) - f(\mathbf{x}_t) \\
&\leq&\displaystyle \sum_{\mathcal{I}\in\mathcal{U}}\mu_{t-1}^\mathcal{I}(\mathbf{x}_t^\mathcal{I}) + \beta_t^{1/2} \sum_{\mathcal{I}\in\mathcal{U}}\sigma_{t-1}^\mathcal{I}(\mathbf{x}_t^\mathcal{I}) - f(\mathbf{x}_t)\ .
\end{array}
\label{eq:a12}
\end{equation}
On the other hand, also by~\eqref{eq:a10},
\begin{equation}
\sum_{\mathcal{I}\in\mathcal{U}}\mu_{t-1}^\mathcal{I}(\mathbf{x}_t^\mathcal{I}) - f(\mathbf{x}_t) \leq \beta_t^{1/2}\sum_{\mathcal{I}\in\mathcal{U}}\sigma_{t-1}^\mathcal{I}(\mathbf{x}_t^\mathcal{I}) \ .
\label{eq:a13}
\end{equation}
Plugging~\eqref{eq:a13} into~\eqref{eq:a12} yields
$$
r_t \leq 2\beta_t^{1/2}\sum_{\mathcal{I}\in\mathcal{U}}\sigma_{t-1}^\mathcal{I}(\mathbf{x}_t^\mathcal{I}) \ .
$$
\end{proof}
%
\noindent
\emph{Main Proof}. Lemma~\ref{lem:1} guarantees that~\eqref{eq:a10} of Lemma~\ref{lem:2} holds universally for all pairs of $(\mathbf{x}, t)$ with probability of at least $1 - \delta$. As such, Theorem~\ref{theo:1} follows.
\section{Proof of Theorem~\ref{theo:2}}
\label{app:b}
From Theorem~\ref{theo:1},
\begin{equation}
\begin{array}{rcl}
\displaystyle\sum_{t=1}^{n} r^2_t &\leq& \displaystyle\sum_{t=1}^{n} 4\beta_{t} \left( \sum_{\mathcal{I} \in \mathcal{U}} \sigma^{\mathcal{I}}_{t-1}(\mathbf{x}^{\mathcal{I}}_t) \right)^2  \\
&\leq&\displaystyle \sum_{t=1}^{n} \sum_{\mathcal{I} \in \mathcal{U}} 4\beta_t |\mathcal{U}| \sigma^{\mathcal{I}}_{t-1}(\mathbf{x}^{\mathcal{I}}_t)^2  
\end{array}
\label{eq:b1}
\end{equation}
where the second inequality is due to the Cauchy-Schwarz inequality.
To prove Theorem~\ref{theo:2}, we first introduce a mild assumption on the relationship between the posterior variance $\sigma^{\mathcal{I}}_{t-1}(\mathbf{x}^{\mathcal{I}}_t)^2$ conditioned on noisy outputs of $f$ and its counterpart $\widehat{\sigma}^{\mathcal{I}}_{t-1}(\mathbf{x}^{\mathcal{I}}_t)^2$ conditioned on noisy outputs of $f_{\mathcal{I}}$ (i.e., assuming hypothetically that they are available) which are  
perturbed by the same i.i.d. Gaussian noise $\epsilon \sim \mathcal{N}(0, \sigma_n^2)$, i.e., 
$$
\widehat{\sigma}^\mathcal{I}_{t-1}(\mathbf{x}_t^{\mathcal{I}})^2 \triangleq \sigma_0(\mathbf{x}_t^{\mathcal{I}},\mathbf{x}_t^{\mathcal{I}}) - 
\mathbf{k}^{\mathcal{I}^{\top}}_{\mathbf{x}_t}
(\mathbf{K}^{\mathcal{I}}
+ \sigma^2_n\mathbf{I})^{-1}
\mathbf{k}_{\mathbf{x}_t}^{\mathcal{I}}
$$
where $\mathbf{k}_{\mathbf{x}_t}^{\mathcal{I}} \triangleq (\sigma_0^\mathcal{I}(\mathbf{x}^{\mathcal{I}}_t, \mathbf{x}_i^{\mathcal{I}}))^\top_{i=1,\ldots,t-1}$ and $\mathbf{K}^{\mathcal{I}} \triangleq (\sigma_0(\mathbf{x}^{\mathcal{I}}_i,\mathbf{x}^{\mathcal{I}}_j))_{i,j=1,\ldots,t-1}$. 
\begin{assumption}
For any sequence of input queries $\langle\mathbf{x}_t^{\mathcal{I}}\rangle^{n}_{t=1}$, there exists an arbitrary decreasing function $h:\mathbb{N}\rightarrow\mathbb{R}$ such that $\lim_{t \rightarrow \infty} h(t) > 0$ and 
$$
	h(t-1)\ \sigma_{t-1}^{\mathcal{I}}(\mathbf{x}_t^{\mathcal{I}})^2 \leq \widehat{\sigma}_{t-1}^{\mathcal{I}}(\mathbf{x}_t^{\mathcal{I}})^2
$$
for $t\in\mathbb{N}$.
\end{assumption}
Assumption 4 allows us to bound the posterior variance $\sigma^{\mathcal{I}}_{t-1}(\mathbf{x}^{\mathcal{I}}_t)^2$ from the above by
\begin{equation}
	\sigma^{\mathcal{I}}_{t-1}(\mathbf{x}^{\mathcal{I}}_t)^2 \leq\frac{1}{h(t-1)}\widehat{\sigma}^{\mathcal{I}}_{t-1}(\mathbf{x}^{\mathcal{I}}_t)^2\ .
	\label{eq:b3}
\end{equation}
Plugging~\eqref{eq:b3} into~\eqref{eq:b1} yields
\begin{equation}
\begin{array}{rcl}
	\displaystyle\sum_{t=1}^{n} r^2_t &\leq&\displaystyle\sum_{t=1}^n\sum_{\mathcal{I}\in\mathcal{U}}\frac{4\beta_t|\mathcal{U}|}{h(t-1)}\widehat{\sigma}^{\mathcal{I}}_{t-1}(\mathbf{x}^{\mathcal{I}}_t)^2  \\
	&\leq& \displaystyle F\sum_{t=1}^n\sum_{\mathcal{I}\in\mathcal{U}}\widehat{\sigma}^{\mathcal{I}}_{t-1}(\mathbf{x}^{\mathcal{I}}_t)^2
\end{array}
	\label{eq:b4}
\end{equation}
where $F \triangleq 4\beta_n|\mathcal{U}|/\lim_{t \rightarrow \infty}h(t)$
and the last inequality is due to the monotonic increase of $\beta_t$ and $1/	h(t)$ in $t$. 
Finally, to relate the total posterior variance $\sum_{t=1}^n \widehat{\sigma}^{\mathcal{I}}_{t-1}(\mathbf{x}^{\mathcal{I}}_t)^2$ to the maximum information gain $\gamma^{\mathcal{I}}_n$ (Definition~\ref{def:1}) for each factor function $f_\mathcal{I}$, we exploit the monotonically increasing property of the following function $g(s) = s / \log(1 + s)$ with $s = \sigma_n^{-2}\widehat{\sigma}_{t-1}^\mathcal{I}(\mathbf{x}_t^\mathcal{I})^2$, as detailed below.

Specifically, since  the function $g(s) = s/\log(1 + s)$ increases monotonically on $[0,\infty)$ and  $\sigma_n^{-2}\widehat{\sigma}_{t-1}^\mathcal{I}(\mathbf{x}_t^{\mathcal{I}})^2 \leq \sigma_n^{-2}\sigma_0^{\mathcal{I}}(\mathbf{x}_t^{\mathcal{I}}, \mathbf{x}_t^{\mathcal{I}}) \leq \sigma_n^{-2}\sigma_s^2$ where\footnote{The first inequality follows because $\widehat{\sigma}_0^{\mathcal{I}}(\mathbf{x}_t^{\mathcal{I}}, \mathbf{x}_t^{\mathcal{I}})= \sigma_0^{\mathcal{I}}(\mathbf{x}_t^{\mathcal{I}}, \mathbf{x}_t^{\mathcal{I}})$ and the GP posterior variance is always non-increasing, 
i.e., $\widehat{\sigma}_0^{\mathcal{I}}(\mathbf{x}_t^{\mathcal{I}}, \mathbf{x}_t^{\mathcal{I}}) \geq \widehat{\sigma}_{t-1}^{\mathcal{I}}(\mathbf{x}_t^{\mathcal{I}}, \mathbf{x}_t^{\mathcal{I}})$ for all $t \in \mathbb{N}$.} $\sigma^2_s$ denotes the signal variance, it follows that
$$
	\widehat{\sigma}_{t-1}^{\mathcal{I}}(\mathbf{x}_t^{\mathcal{I}})^2 \leq \sigma_n^2\ g(\sigma_n^{-2}\sigma_s^2)\log\left(1 + \sigma_n^{-2}\widehat{\sigma}_{t-1}^\mathcal{I}(\mathbf{x}_t^\mathcal{I})^2\right)\ .
$$
Applying this result to~\eqref{eq:b4} gives
\begin{equation}
\begin{array}{rcl}
	\displaystyle\sum_{t=1}^{n} r^2_t &\leq& \displaystyle\beta_nC\sum_{\mathcal{I}\in\mathcal{U}}\frac{1}{2}\sum_{t=1}^n\log\left(1 + \sigma_n^{-2}\widehat{\sigma}_{t-1}^\mathcal{I}(\mathbf{x}_t^\mathcal{I})^2\right) \\
	&\leq& \displaystyle\beta_nC\sum_{\mathcal{I}\in\mathcal{U}} \gamma_n^\mathcal{I} \ \ =\ \ C\beta_n\gamma_n   
\end{array}
	\label{eq:b6}
\end{equation}
where $C \triangleq 2\sigma_n^2F g(\sigma_n^{-2}\sigma_s^2)$ 
and the second inequality follows directly from
Lemma $5.3$ of \citeauthor{Srinivas10}~\shortcite{Srinivas10}. Finally, applying Cauchy-Schwarz to the LHS of~\eqref{eq:b6} yields
$$
	R_n \triangleq \sum_{t=1}^n r_t \leq \sqrt{Cn\beta_n\gamma_n} \leq \sqrt{C|\mathcal{U}|}\sqrt{n\beta_n\gamma_n^{\mathcal{I}_\ast}} 
$$
where $\gamma_n^{\mathcal{I}_\ast} \triangleq \max_{\mathcal{I}}\gamma_n^\mathcal{I}$. Since Theorem $5$ of~\citeauthor{Srinivas10}~\shortcite{Srinivas10} has shown a sublinear growth of the maximum information gain $\gamma_n^{\mathcal{I}_\ast}$ in $n$ for the factor function $f_{\mathcal{I}_\ast}$,
it follows that $\lim_{n\rightarrow \infty} \sqrt{n\beta_n\gamma_n^{\mathcal{I}_\ast}}/n = 0$.
 This consequently implies $\lim_{n\rightarrow\infty} R_n/n = 0$ or, equivalently, $R_n \leq (Cn\beta_n\gamma_n)^{1/2} \in o(n)$ when 
 Theorem~\ref{theo:1} holds.
 However, since Theorem~\ref{theo:1} only holds with probability of at least $1 - \delta$,
 this implies $R_n \leq (Cn\beta_n\gamma_n)^{1/2} \in o(n)$ with probability of at least $1 - \delta$.
\section{Proof of Theorem~\ref{theo:3}}
\label{app:c}
As previously discussed in Section~\ref{analysis}, Theorem~\ref{theo:2} is unfortunately rendered void with infinite, continuous input spaces which cause 
$\langle\beta_t\rangle_t$ to tend to infinity. 
Fortunately, much of the proof of Lemma~\ref{lem:1}
can still be reused to bound $|f(\mathbf{x}) - \sum_{\mathcal{I}\in\mathcal{U}}\mu_{t-1}^\mathcal{I}(\mathbf{x}^\mathcal{I})|$ for an arbitrary sequence of input queries selected by our DEC-HBO algorithm (Lemma~\ref{lem:3}) and an arbitrary finite discretization of the input space (Lemma~\ref{lem:4}).
The real challenge is, however, to extend such an analysis to an arbitrary input not among the discretized inputs.
 As shall be elaborated later, this can be achieved by exploiting the Lipschitz continuity of the objective function $f$ (see Assumption~\ref{aaa}) to essentially bound the difference between outputs of $f$ at two separate inputs in terms of their proximity, thus effectively ``projecting'' our bound beyond the discretization, which constitutes the central theme of this proof.
%
%
\begin{lemma}
\label{lem:3}
Given $\delta \in (0, 1)$, let $\beta_t \triangleq 2\log(|\mathcal{U}|\pi_t/\delta)$, and $\langle\mathbf{x}_t\rangle_{t=1}^\infty$ denote an arbitrary sequence of input queries selected by our DEC-HBO algorithm. Then, with probability of at least $1 - \delta$,
\begin{equation}
\left|f(\mathbf{x}_t) - \sum_{\mathcal{I}\in\mathcal{U}}\mu_{t-1}^\mathcal{I}(\mathbf{x}_t^\mathcal{I})\right| \leq \beta_t^{1/2}\sum_{\mathcal{I}\in\mathcal{U}}\sigma_{t-1}^\mathcal{I}(\mathbf{x}_t^\mathcal{I}) \label{eq:c1}
\end{equation}
for all $t \in \mathbb{N}$.
\end{lemma}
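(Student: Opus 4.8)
The plan is to mirror the proof of Lemma~\ref{lem:1} while exploiting the crucial relaxation that the bound~\eqref{eq:c1} is only required to hold at the countable sequence of selected queries $\langle\mathbf{x}_t\rangle_{t=1}^\infty$ rather than at every $\mathbf{x}\in\mathcal{D}$. This is precisely what lets us drop the $|\mathcal{D}|$ factor from $\beta_t$ (which would be infinite in the continuous setting) and replace the union bound over $\mathcal{D}\times\mathcal{U}\times\mathbb{N}$ used in Lemma~\ref{lem:1} by one over $\mathcal{U}\times\mathbb{N}$ only. Everything else is designed to reduce to the finite-case computation.

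First I would fix a single tuple $(\mathcal{I}, t)$. The subtlety here, and what I expect to be the main obstacle, is that $\mathbf{x}_t$ is not a fixed point but a random quantity determined by the past observations $y_1,\ldots,y_{t-1}$, since it is the argmax of the acquisition function built from them. To handle this, I would condition on the filtration generated by $\mathbf{x}_1,\ldots,\mathbf{x}_{t-1}$ and $y_1,\ldots,y_{t-1}$. Given this history, $\mathbf{x}_t$ is deterministic, and hence so are $\mu_{t-1}^\mathcal{I}(\mathbf{x}_t^\mathcal{I})$ and $\sigma_{t-1}^\mathcal{I}(\mathbf{x}_t^\mathcal{I})$; moreover, by Assumption~2 the conditional posterior of the latent factor value is $f_\mathcal{I}(\mathbf{x}_t^\mathcal{I})\sim\mathcal{N}(\mu_{t-1}^\mathcal{I}(\mathbf{x}_t^\mathcal{I}), \sigma_{t-1}^\mathcal{I}(\mathbf{x}_t^\mathcal{I})^2)$. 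Standardizing exactly as in Lemma~\ref{lem:1} produces a standard normal variate, so the Gaussian tail bound gives $\mathrm{Pr}(|f_\mathcal{I}(\mathbf{x}_t^\mathcal{I}) - \mu_{t-1}^\mathcal{I}(\mathbf{x}_t^\mathcal{I})| \leq \beta_t^{1/2}\sigma_{t-1}^\mathcal{I}(\mathbf{x}_t^\mathcal{I})) \geq 1 - \exp(-\beta_t/2)$. Since this lower bound depends only on $\beta_t$ and not on the realized history, it survives marginalization over the history and therefore holds unconditionally.

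Next I would take the union bound over $\mathcal{I}\in\mathcal{U}$ and $t\in\mathbb{N}$ only. With $\beta_t \triangleq 2\log(|\mathcal{U}|\pi_t/\delta)$ and $\pi_t = \pi^2 t^2/6$, we have $\exp(-\beta_t/2) = \delta/(|\mathcal{U}|\pi_t)$, so the total failure probability is at most $\sum_{t=1}^\infty |\mathcal{U}|\,\delta/(|\mathcal{U}|\pi_t) = \delta\sum_{t=1}^\infty 6/(\pi^2 t^2) = \delta$, using $\sum_{t\geq 1} t^{-2} = \pi^2/6$. On the complementary event, of probability at least $1-\delta$, the per-factor inequalities hold simultaneously for every $(\mathcal{I},t)$.

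Finally, on that event I would sum the per-factor inequalities over $\mathcal{I}\in\mathcal{U}$ and apply the triangle inequality together with the decomposition $f(\mathbf{x}_t) = \sum_{\mathcal{I}\in\mathcal{U}} f_\mathcal{I}(\mathbf{x}_t^\mathcal{I})$ from Assumption~1, just as in the closing display of Lemma~\ref{lem:1}, to obtain~\eqref{eq:c1} simultaneously for all $t\in\mathbb{N}$. I expect the only genuine difficulty to be the conditioning step that legitimizes applying the Gaussian tail bound at the adaptively chosen query $\mathbf{x}_t$; once that is settled, the remainder is a routine re-run of the finite-case argument with the cheaper union bound.
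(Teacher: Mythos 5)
Your proposal is correct and follows essentially the same route as the paper's proof: a per-factor Gaussian tail bound at the selected query, a union bound over $\mathcal{U}\times\mathbb{N}$ only (which is exactly what removes the $|\mathcal{D}|$ factor from $\beta_t$), and a final summation over $\mathcal{I}\in\mathcal{U}$. In fact, your explicit conditioning on the history to justify that $f_\mathcal{I}(\mathbf{x}_t^\mathcal{I})\sim\mathcal{N}(\mu_{t-1}^\mathcal{I}(\mathbf{x}_t^\mathcal{I}),\sigma_{t-1}^\mathcal{I}(\mathbf{x}_t^\mathcal{I})^2)$ at the adaptively chosen $\mathbf{x}_t$ is a point the paper leaves implicit (it simply says ``using a similar argument to that of Lemma~1''), so your write-up is, if anything, slightly more careful.
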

\begin{proof} 
This result is similar to that of Lemma~\ref{lem:1} but restricted to the (infinitely) countable sequence of  input queries selected by our DEC-HBO algorithm. Most arguments established in the proof of Lemma~\ref{lem:1} can be reused here, except that the union bound does not have to be applied to the entire input space, hence not causing $\langle\beta_t\rangle_t$ to blow up to infinity. 
In particular, using a similar argument to that of Lemma~\ref{lem:1}, for a given tuple $(\mathcal{I}, t)$, 
$$
\displaystyle\mathrm{Pr}\left(\left|f_\mathcal{I}(\mathbf{x}^\mathcal{I}_t) - \mu_{t-1}^\mathcal{I}(\mathbf{x}_t^\mathcal{I})\right| \leq \beta_t^{1/2}\sigma_{t-1}^\mathcal{I}(\mathbf{x}_t^\mathcal{I})\right)\geq 1 - \exp(-{\beta_t}/{2}) . 
$$
Applying the union bound over all tuples $(\mathcal{I}, t)$ yields
\begin{equation}
\begin{array}{l}
\displaystyle\mathrm{Pr}\left(\forall\mathcal{I}, t\in\mathbb{N}\ \ |f_\mathcal{I}(\mathbf{x}^\mathcal{I}_t) - \mu_{t-1}^\mathcal{I}(\mathbf{x}_t^\mathcal{I})| \leq \beta_t^{1/2}\sigma_{t-1}^\mathcal{I}(\mathbf{x}_t^\mathcal{I})\right) \\
\displaystyle\geq 1 - |\mathcal{U}|\sum_{t=1}^{\infty}\exp(-{\beta_t}/{2}) = 1 - \delta\ . 
\end{array}
\label{eq:c3}
\end{equation}
Using a similar argument as that of Lemma~\ref{lem:1}, with probability of at least $1 - \delta$, the following inequality holds simultaneously for all $t \in \mathbb{N}$:
$$
\left|f(\mathbf{x}_t) - \sum_{\mathcal{I}\in\mathcal{U}}\mu_{t-1}^\mathcal{I}(\mathbf{x}_t^\mathcal{I})\right| \leq \beta_t^{1/2}\sum_{\mathcal{I}\in\mathcal{U}}\sigma_{t-1}^\mathcal{I}(\mathbf{x}_t^\mathcal{I}) \ . 
$$
\end{proof}
\begin{lemma}
\label{lem:4}
Given $\delta \in (0, 1)$, let $\mathcal{D}_t$ denote an arbitrary finite discretisation of the input space, and $\beta_{t} \triangleq 2\log(|\mathcal{D}_t||\mathcal{U}|\pi_{t}/\delta)$.
Then, with probability of at least $1 - \delta$,
$$
\left|f(\mathbf{x}) - \sum_{\mathcal{I} \in \mathcal{U}} \mu^\mathcal{I}_{t-1}(\mathbf{x}^\mathcal{I})\right| \leq \beta_t^{1/2}\sum_{\mathcal{I} \in \mathcal{U}} \sigma_{t-1}^\mathcal{I}(\mathbf{x}^\mathcal{I}) 
$$
for all $\mathbf{x} \in \mathcal{D}_t$ and $t\in\mathbb{N}$.
\end{lemma}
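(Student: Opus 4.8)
The plan is to mirror the proof of Lemma~\ref{lem:1} almost verbatim, the only substantive change being that the fixed finite domain $\mathcal{D}$ is replaced by the time-varying finite discretization $\mathcal{D}_t$, with $\beta_t$ correspondingly redefined so that the union-bound series still sums to $\delta$. First I would invoke Assumption 2: for any $\mathbf{x} \in \mathcal{D}_t$, $\mathcal{I} \in \mathcal{U}$, and $t \in \mathbb{N}$, the posterior of $f_\mathcal{I}(\mathbf{x}^\mathcal{I})$ is $\mathcal{N}(\mu_{t-1}^\mathcal{I}(\mathbf{x}^\mathcal{I}), \sigma_{t-1}^\mathcal{I}(\mathbf{x}^\mathcal{I})^2)$. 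Standardizing via $r \triangleq (f_\mathcal{I}(\mathbf{x}^\mathcal{I}) - \mu_{t-1}^\mathcal{I}(\mathbf{x}^\mathcal{I}))/\sigma_{t-1}^\mathcal{I}(\mathbf{x}^\mathcal{I}) \sim \mathcal{N}(0,1)$ yields the standard Gaussian tail bound $\mathrm{Pr}(|r| \leq \beta_t^{1/2}) \geq 1 - \exp(-\beta_t/2)$; equivalently, for each fixed tuple $(\mathbf{x}, \mathcal{I}, t)$ the inequality $|f_\mathcal{I}(\mathbf{x}^\mathcal{I}) - \mu_{t-1}^\mathcal{I}(\mathbf{x}^\mathcal{I})| \leq \beta_t^{1/2}\sigma_{t-1}^\mathcal{I}(\mathbf{x}^\mathcal{I})$ fails with probability at most $\exp(-\beta_t/2)$.

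Next comes the critical step: apply the union bound over all tuples $(\mathbf{x} \in \mathcal{D}_t, \mathcal{I} \in \mathcal{U}, t \in \mathbb{N})$. At each $t$ there are exactly $|\mathcal{D}_t||\mathcal{U}|$ such tuples, so the total failure probability is at most $\sum_{t=1}^\infty |\mathcal{D}_t||\mathcal{U}|\exp(-\beta_t/2)$. Substituting $\beta_t = 2\log(|\mathcal{D}_t||\mathcal{U}|\pi_t/\delta)$ makes $\exp(-\beta_t/2) = \delta/(|\mathcal{D}_t||\mathcal{U}|\pi_t)$, so each summand collapses to $\delta/\pi_t$ and the $|\mathcal{D}_t|$ factor cancels regardless of how the discretization grows with $t$. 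Since $\pi_t = \pi^2 t^2/6$ gives $\sum_{t=1}^\infty 1/\pi_t = 1$, the aggregate failure probability is at most $\delta$, so the per-factor bound holds simultaneously for all such tuples with probability at least $1 - \delta$.

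Finally, conditioned on this event, I would sum the two-sided per-factor inequalities over $\mathcal{I} \in \mathcal{U}$ exactly as in Lemma~\ref{lem:1}, using $f(\mathbf{x}) = \sum_{\mathcal{I} \in \mathcal{U}} f_\mathcal{I}(\mathbf{x}^\mathcal{I})$ and the triangle inequality to obtain $|f(\mathbf{x}) - \sum_{\mathcal{I} \in \mathcal{U}}\mu_{t-1}^\mathcal{I}(\mathbf{x}^\mathcal{I})| \leq \beta_t^{1/2}\sum_{\mathcal{I} \in \mathcal{U}}\sigma_{t-1}^\mathcal{I}(\mathbf{x}^\mathcal{I})$ for all $\mathbf{x} \in \mathcal{D}_t$ and $t \in \mathbb{N}$, as claimed.

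The main (and really the only) obstacle is verifying that the time-dependence of $|\mathcal{D}_t|$ does not spoil the convergence of the union-bound series: the whole point of absorbing $|\mathcal{D}_t|$ into $\beta_t$ is that the surviving sum $\sum_t \delta/\pi_t$ is a convergent Basel-type series whose value is independent of the discretization schedule. Everything else is a transcription of Lemma~\ref{lem:1}, and in particular this lemma will supply the discretized-input regret bound that Theorem~\ref{theo:3} subsequently lifts to arbitrary inputs via the Lipschitz assumption.
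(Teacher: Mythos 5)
Your proof is correct and takes essentially the same route as the paper, whose entire proof is the one-line observation that the result follows by applying Lemma~\ref{lem:1} to the finite discretization $\mathcal{D}_t$. Your extra step of checking that the time-varying $|\mathcal{D}_t|$ cancels inside the union-bound series (leaving $\sum_{t=1}^{\infty}\delta/\pi_t = \delta$ regardless of the discretization schedule) is exactly the detail that justifies that one-line appeal.
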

\begin{proof} 
This result follows directly by applying Lemma~\ref{lem:1} to the finite discretization of the input space $\mathcal{D}_t$.
\end{proof}
Putting together the above results of Lemmas~\ref{lem:3} and~\ref{lem:4}, it is straightforward to see that the instantaneous regret $f(\mathbf{x}_\ast) - f(\mathbf{x}_t)$ can be bounded with high probability if $\mathbf{x}_\ast \in \mathcal{D}_t$. 
A tricky situation, however, arises when the global maximizer $\mathbf{x}_\ast$ is not among the discretized inputs. To resolve this, one possible approach is to relate the output of $\mathbf{x}_\ast$ to that of its closest discretized input.
If this can be achieved, then we can exploit Lemma~\ref{lem:4} to deliver a high-confidence bound on the instantaneous regret. 

Specifically, suppose that we choose a finite discretization $\mathcal{D}_t$ of the original input space $\mathcal{D} = [0, r]^d$  in iteration $t$ such that each dimension has $\tau_t$ uniformly-spaced discretized inputs; the exact value for $\tau_t$ will be determined later in Lemma~\ref{lem:5}.
That is, $|\mathcal{D}_t| = \tau_t^d$ and $\|\mathbf{x} - [\mathbf{x}]_t\|_1  \leq rd/\tau_t$ for all $\mathbf{x} \in \mathcal{D}_t$ where $[\mathbf{x}]_t$ denotes the closest discretized input in $\mathcal{D}_t$ to $\mathbf{x}$. Under this setting, we are now ready to bound the true output of the global maximizer in terms of the predicted output (i.e., using the sum of GP posterior means of the outputs of all factor functions) of its closest discretized input with high confidence:
\begin{lemma}
\label{lem:5}
Given $\delta \in (0, 1)$, let $\beta_t \triangleq 2\log(2|\mathcal{U}|\pi_t/\delta) + 2d\log(rdbt^2\sqrt{\log(2|\mathcal{U}|a/\delta)})$
where $a$ and $b$ are the Lipschitz constants defined previously in Assumption $3$. Then, 
with probability of at least $1 - \delta$,
$$
\displaystyle
\left|f(\mathbf{x}_\ast) - \sum_{\mathcal{I}\in\mathcal{U}}\mu_{t-1}^\mathcal{I}([\mathbf{x}^\mathcal{I}_\ast]_t)\right| \leq \beta_t^{1/2}\sum_{\mathcal{I}\in\mathcal{U}}\sigma_{t-1}^\mathcal{I}([\mathbf{x}^\mathcal{I}_\ast]_t) + \frac{1}{t^2} 
$$
for all $t \in\mathbb{N}$.
\end{lemma}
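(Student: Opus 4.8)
The plan is to split the target quantity via the triangle inequality into two pieces---a discretization error controlled by Lipschitz continuity, and a confidence-bound term handled by Lemma~\ref{lem:4}---and to allocate a failure budget of $\delta/2$ to each. First I would fix the Lipschitz constant $L$ not as an externally given quantity but as a function of $\delta$: choosing $L \triangleq b\sqrt{\log(2|\mathcal{U}|a/\delta)}$ makes the failure probability in Assumption~\ref{aaa} equal to $a|\mathcal{U}|\exp(-L^2/b^2) = \delta/2$, so the event $\{\forall \mathbf{x},\mathbf{x}'\ |f(\mathbf{x})-f(\mathbf{x}')|\le L\|\mathbf{x}-\mathbf{x}'\|_1\}$ holds with probability at least $1-\delta/2$. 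This choice is exactly what injects the $\sqrt{\log(2|\mathcal{U}|a/\delta)}$ factor into the stated $\beta_t$.

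Next I would calibrate the discretization granularity. With the uniform grid $\mathcal{D}_t$ described above, $\|\mathbf{x}_\ast - [\mathbf{x}_\ast]_t\|_1 \le rd/\tau_t$, so on the Lipschitz event $|f(\mathbf{x}_\ast) - f([\mathbf{x}_\ast]_t)| \le Lrd/\tau_t$. Setting $\tau_t \triangleq rdbt^2\sqrt{\log(2|\mathcal{U}|a/\delta)} = rdt^2 L$ forces this discretization error to equal exactly $1/t^2$. With this choice $|\mathcal{D}_t| = \tau_t^d$, and hence $2\log(|\mathcal{D}_t||\mathcal{U}|\pi_t/(\delta/2)) = 2\log(2|\mathcal{U}|\pi_t/\delta) + 2d\log\tau_t$, which is precisely the $\beta_t$ in the statement; this confirms that the two pieces are consistently parametrized.

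Then I would invoke Lemma~\ref{lem:4} with failure budget $\delta/2$ and this very $\beta_t$: since $[\mathbf{x}_\ast]_t \in \mathcal{D}_t$, with probability at least $1-\delta/2$ the bound $|f([\mathbf{x}_\ast]_t) - \sum_{\mathcal{I}}\mu_{t-1}^\mathcal{I}([\mathbf{x}_\ast^\mathcal{I}]_t)| \le \beta_t^{1/2}\sum_{\mathcal{I}}\sigma_{t-1}^\mathcal{I}([\mathbf{x}_\ast^\mathcal{I}]_t)$ holds simultaneously for all $t$. A triangle inequality through $f([\mathbf{x}_\ast]_t)$ then combines the discretization residual $1/t^2$ with this confidence term, and a union bound over the two $\delta/2$-failure events yields the claim with probability at least $1-\delta$.

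The main obstacle is conceptual rather than computational: treating $L$ as a tunable parameter coupled to the confidence level $\delta$ (rather than a fixed constant), and then co-designing $\tau_t$ so that both the Lipschitz residual collapses to exactly $1/t^2$ and the induced $\log|\mathcal{D}_t| = d\log\tau_t$ reproduces the prescribed $\beta_t$. Care is also needed to keep the union bound over the countably many iterations $t$ summable---this is what the factor $\pi_t = \pi^2 t^2/6$ secures, since $\sum_t \pi_t^{-1}=1$---so that the $\delta/2$ budget is not exhausted across iterations.
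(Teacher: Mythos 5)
Your proposal is correct and follows essentially the same route as the paper's own proof: setting $L = b\sqrt{\log(2|\mathcal{U}|a/\delta)}$ to spend $\delta/2$ on the Lipschitz event, choosing $\tau_t = rdbt^2\sqrt{\log(2|\mathcal{U}|a/\delta)}$ so the discretization residual is exactly $1/t^2$, invoking Lemma~\ref{lem:4} with budget $\delta/2$ on the grid $\mathcal{D}_t$, and combining via the triangle inequality and a union bound. Your additional check that $2\log(|\mathcal{D}_t||\mathcal{U}|\pi_t/(\delta/2)) = 2\log(2|\mathcal{U}|\pi_t/\delta) + 2d\log\tau_t$ reproduces the stated $\beta_t$ is exactly the consistency verification the paper performs at the end of its proof.
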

\begin{proof} 
Setting $L \triangleq b\sqrt{\log(2|\mathcal{U}|a/\delta)}$ in Assumption $3$ 
immediately implies that
$$
\left|f(\mathbf{x}) - f(\mathbf{x}')\right| \leq b\sqrt{\log\left(\frac{2|\mathcal{U}|a}{\delta}\right)}\|\mathbf{x}-\mathbf{x}'\|_1 
$$
holds simultaneously for all $\mathbf{x}, \mathbf{x}'\in\mathcal{D}$ with probability of at least $1 - \delta/2$. 
Then, 
$$
\begin{array}{rcl}
\left|f(\mathbf{x}_\ast) - f([\mathbf{x}_\ast]_t)\right| &\leq& \displaystyle b\sqrt{\log\left(\frac{2|\mathcal{U}|a}{\delta}\right)}\|\mathbf{x}_\ast-[\mathbf{x}_\ast]_t\|_1\\
&\leq& \displaystyle\frac{rdb}{\tau_t}\sqrt{\log\left(\frac{2|\mathcal{U}|a}{\delta}\right)} 
\end{array}
$$
holds simultaneously for all $t \in\mathbb{N}$ with probability of at least $1 - \delta/2$ such that the second inequality is due to the construction of $\mathcal{D}_t$ described earlier. Now, by choosing $\tau_t \triangleq rdbt^2\sqrt{\log(2|\mathcal{U}|a/\delta)}$, 
\begin{equation}
\left|f(\mathbf{x}_\ast) - f([\mathbf{x}_\ast]_t)\right| \leq \frac{1}{t^2}  
\label{eq:c10}
\end{equation}
holds simultaneously for all $t \in\mathbb{N}$ with probability of at least $1 - \delta/2$. This also implies $|\mathcal{D}_t| = (rdbt^2\sqrt{\log(2|\mathcal{U}|a/\delta)})^d$. 
By applying this choice of $\mathcal{D}_t$ and $\delta/2$ to Lemma~\ref{lem:4},
\begin{equation}
\left|f([\mathbf{x}_\ast]_t) - \sum_{\mathcal{I}\in\mathcal{U}}\mu_{t-1}^\mathcal{I}([\mathbf{x}^\mathcal{I}_\ast]_t)\right| \leq\beta_t^{1/2}\sum_{\mathcal{I}\in\mathcal{U}}\sigma_{t-1}^\mathcal{I}([\mathbf{x}^\mathcal{I}_\ast]_t) 
\label{eq:c11}
\end{equation}
holds simultaneously for all $t \in\mathbb{N}$ with probability of at least $1 - \delta/2$ where $\beta_t = 2\log(2|\mathcal{D}_t||\mathcal{U}|\pi_t/\delta)$. By combining~\eqref{eq:c10} and~\eqref{eq:c11} and using the union bound, the following inequality holds simultaneously for all $t \in\mathbb{N}$ with probability of at least $1 - \delta$:
$$
\left|f([\mathbf{x}_\ast]_t) - \sum_{\mathcal{I}\in\mathcal{U}}\mu_{t-1}^\mathcal{I}([\mathbf{x}^\mathcal{I}_\ast]_t)\right| \leq  \beta_t^{1/2}\sum_{\mathcal{I}\in\mathcal{U}}\sigma_{t-1}^\mathcal{I}([\mathbf{x}_\ast]_t^\mathcal{I}) + \frac{1}{t^2} 
$$
for $\beta_t = 2\log(2|\mathcal{D}_t||\mathcal{U}|\pi_t/\delta) = 2\log(2|\mathcal{U}|\pi_t/\delta) + 2d\log(rdbt^2\sqrt{\log(2|\mathcal{U}|a/\delta)})$. The second equality follows directly from the above choice of $\mathcal{D}_t$.
\end{proof}
Using Lemma~\ref{lem:5}, we are now ready to bound the instantaneous regret $f(\mathbf{x}_\ast) - f(\mathbf{x}_t)$:
\begin{lemma}
Given $\delta \in (0, 1)$, with probability at least $1 - \delta$,
$$
\displaystyle r_t \leq 2\beta_t^{1/2}\sum_{\mathcal{I}\in\mathcal{U}}\sigma_{t-1}^\mathcal{I}(\mathbf{x}_t^\mathcal{I}) + \frac{1}{t^2}  
$$
for all $t \in\mathbb{N}$ where $\beta_t$ is previously defined in Lemma~\ref{lem:5}.
\end{lemma}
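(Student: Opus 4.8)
The plan is to mimic the discrete-case argument of Lemma~\ref{lem:2} line for line, changing only the step that previously exploited $\mathbf{x}_\ast$ directly: since in the continuous setting $\mathbf{x}_\ast$ need not belong to the analysis discretization $\mathcal{D}_t$, I would route the upper bound on $f(\mathbf{x}_\ast)$ through its closest discretized surrogate $[\mathbf{x}_\ast]_t \in \mathcal{D}_t$ using the already-established Lemma~\ref{lem:5}. Concretely, on the event where Lemma~\ref{lem:5} holds,
$$
f(\mathbf{x}_\ast) \leq \sum_{\mathcal{I}\in\mathcal{U}}\mu_{t-1}^\mathcal{I}([\mathbf{x}_\ast^\mathcal{I}]_t) + \beta_t^{1/2}\sum_{\mathcal{I}\in\mathcal{U}}\sigma_{t-1}^\mathcal{I}([\mathbf{x}_\ast^\mathcal{I}]_t) + \frac{1}{t^2} = \sum_{\mathcal{I}\in\mathcal{U}}\varphi_t^\mathcal{I}([\mathbf{x}_\ast^\mathcal{I}]_t) + \frac{1}{t^2},
$$
the key point being that the right-hand side (minus the $1/t^2$ slack) is exactly the additive acquisition function evaluated at a \emph{feasible} point of $\mathcal{D}_t$.

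Next I would invoke the optimality of the selected query. Because $\mathbf{x}_t$ maximizes $\sum_{\mathcal{I}}\varphi_t^\mathcal{I}$ over $\mathcal{D}_t$ (Section~\ref{dBO}) and $[\mathbf{x}_\ast]_t$ is a competitor in $\mathcal{D}_t$, we get $\sum_{\mathcal{I}}\varphi_t^\mathcal{I}([\mathbf{x}_\ast^\mathcal{I}]_t) \leq \sum_{\mathcal{I}}\varphi_t^\mathcal{I}(\mathbf{x}_t^\mathcal{I})$, so the displayed upper bound transfers from $[\mathbf{x}_\ast]_t$ to $\mathbf{x}_t$. Combining this with the confidence lower bound at the query point, namely $f(\mathbf{x}_t) \geq \sum_{\mathcal{I}}\mu_{t-1}^\mathcal{I}(\mathbf{x}_t^\mathcal{I}) - \beta_t^{1/2}\sum_{\mathcal{I}}\sigma_{t-1}^\mathcal{I}(\mathbf{x}_t^\mathcal{I})$ supplied by Lemma~\ref{lem:3}, and subtracting, the posterior-mean terms cancel and the posterior-deviation term doubles, leaving
$$
r_t = f(\mathbf{x}_\ast) - f(\mathbf{x}_t) \leq 2\beta_t^{1/2}\sum_{\mathcal{I}\in\mathcal{U}}\sigma_{t-1}^\mathcal{I}(\mathbf{x}_t^\mathcal{I}) + \frac{1}{t^2},
$$
which is precisely the assertion.

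The only real friction I anticipate is probabilistic bookkeeping rather than any analytic difficulty, since Lemma~\ref{lem:5} has already absorbed the genuinely hard step --- transferring the bound across the discretization via Lipschitz continuity. I need the $\mathbf{x}_\ast$-bound (Lemma~\ref{lem:5}) and the query-bound (Lemma~\ref{lem:3}) to hold at once; I would split the failure probability, assigning $\delta/2$ to each and taking a union bound, so their conjunction holds with probability at least $1-\delta$. The one subtlety is that the two lemmas carry different definitions of $\beta_t$: the discretization-dependent $\beta_t$ of Lemma~\ref{lem:5} dominates the query-sequence $\beta_t$ of Lemma~\ref{lem:3}, and enlarging the latter only widens its confidence interval, so the single $\beta_t$ inherited from Lemma~\ref{lem:5} is legitimate in both inequalities. (Should one prefer to avoid a second union bound, note that the algorithm's query $\mathbf{x}_t$ lies in $\mathcal{D}_t$, so the required lower bound on $f(\mathbf{x}_t)$ can instead be read off the instance of Lemma~\ref{lem:4} already contained in the proof of Lemma~\ref{lem:5}.)
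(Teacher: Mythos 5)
Your proposal is correct and follows essentially the same route as the paper's own proof: upper-bound $f(\mathbf{x}_\ast)$ via Lemma~\ref{lem:5} at the surrogate $[\mathbf{x}_\ast]_t$, transfer to $\mathbf{x}_t$ by acquisition-maximality over $\mathcal{D}_t$, lower-bound $f(\mathbf{x}_t)$ via Lemma~\ref{lem:3}, and union-bound the two events at $\delta/2$ each. Your handling of the $\beta_t$ mismatch (the larger discretization-dependent $\beta_t$ dominates Lemma~\ref{lem:3}'s threshold, and enlarging $\beta_t$ only strengthens that confidence bound) is exactly the point the paper makes in its footnote on the minimum threshold for~\eqref{eq:c1}.
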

\begin{proof} 
By applying Lemma~\ref{lem:5} for $\delta/2$, 
the following inequality 
\begin{equation}
\begin{array}{rcl}
f(\mathbf{x}_\ast) &\leq& \displaystyle\frac{1}{t^2} +  \sum_{\mathcal{I}\in\mathcal{U}}\mu_{t-1}^\mathcal{I}([\mathbf{x}^\mathcal{I}_\ast]_t) + \beta_t^{1/2}\sigma_{t-1}^\mathcal{I}([\mathbf{x}^\mathcal{I}_\ast]_t)  \\
&\leq& \displaystyle\frac{1}{t^2} + \sum_{\mathcal{I}\in\mathcal{U}}\mu_{t-1}^\mathcal{I}(\mathbf{x}_t^\mathcal{I}) + \beta_t^{1/2}\sigma_{t-1}^\mathcal{I}(\mathbf{x}_t^\mathcal{I})  
\end{array}
\label{eq:c14}
\end{equation}
holds simultaneously for all $t \in\mathbb{N}$ with probability of at least $1 - \delta/2$ where $\beta_t = 2\log(4|\mathcal{U}|\pi_t/\delta) + 2d\log(dt^2br\sqrt{\log(4|\mathcal{U}|a/\delta)})$. Note that the second inequality in~\eqref{eq:c14} always hold with certainty due to the definition of $\mathbf{x}_t$. So,
\begin{equation}
\begin{array}{l}
r_t \\
\displaystyle = f(\mathbf{x}_\ast) - f(\mathbf{x}_t) \\
\leq\displaystyle \sum_{\mathcal{I}\in\mathcal{U}}\left(\mu_{t-1}^\mathcal{I}(\mathbf{x}_t^\mathcal{I}) + \beta_t^{1/2}\sigma_{t-1}^\mathcal{I}(\mathbf{x}_t^\mathcal{I})\right) + \frac{1}{t^2} - f(\mathbf{x}_t) \\
= \displaystyle \left(\sum_{\mathcal{I}\in\mathcal{U}}\mu_{t-1}^\mathcal{I}(\mathbf{x}_t^\mathcal{I}) - f(\mathbf{x}_t)\right) + \beta_t^{1/2}\sum_{\mathcal{I}\in\mathcal{U}}\sigma_{t-1}^\mathcal{I}(\mathbf{x}_t^\mathcal{I}) +\frac{1}{t^2} \\
\leq \displaystyle 2\beta_t^{1/2}\sum_{\mathcal{I}\in\mathcal{U}}\sigma_{t-1}^\mathcal{I}\left(\mathbf{x}_t^\mathcal{I}\right) + \frac{1}{t^2} 
\end{array}
\label{eq:c15}
\end{equation}
where the first inequality holds with probability of at least $1 - \delta/2$ due to~\eqref{eq:c14} while the second inequality holds with the same probability of at least $1 - \delta/2$ for $\beta_t \geq 2\log(2|\mathcal{U}|\pi_t/\delta)$ by Lemma~\ref{lem:3}.\footnote{In Lemma~\ref{lem:3}, $\beta_t \geq 2\log(2|\mathcal{U}|\pi_t/\delta)$ is the minimum threshold for which~\eqref{eq:c1} holds.} As such, with $\beta_t = 2\log(4|\mathcal{U}|\pi_t/\delta) + 2d\log(dt^2br\sqrt{\log(4|\mathcal{U}|a/\delta)})$, we can conclude that both inequalities hold simultaneously for all $t\in\mathbb{N}$ with probability of at least $1 - \delta/2$ each. Applying union bound over them guarantees that~\eqref{eq:c15} holds with probability of at least $1 - \delta$. \end{proof}
\noindent\emph{Main Proof}. 
By replicating the arguments used in the proof of Theorem~\ref{theo:2} in Appendix~\ref{app:b} (specifically,~\eqref{eq:b1} to~\eqref{eq:b6})\footnote{Note that these arguments hold regardless of the choice of $\beta_t$. So, they can be reused in Theorem~\ref{theo:3} even though our choice of $\beta_t$ has changed.}, it can be derived that
\begin{equation}
4\sum_{t=1}^n \beta_t \left(\sum_{\mathcal{I}\in\mathcal{U}} \sigma_{t-1}^\mathcal{I}(\mathbf{x}_t^\mathcal{I})\right)^2 \leq C\beta_n\gamma_n 
\label{eq:c16}
\end{equation}
where $C$ and $\gamma_n$ are previously defined in Appendix~\ref{app:b} and Definition~\ref{def:1}, respectively. By applying the Cauchy-Schwarz inequality on the LHS of~\eqref{eq:c16}, 
$$
\left(\sum_{t=1}^n 2\beta_t^{1/2}\sum_{\mathcal{I}\in\mathcal{U}} \sigma_{t-1}^\mathcal{I}(\mathbf{x}_t^\mathcal{I})\right)^2 \leq Cn\beta_n\gamma_n
$$
which immediately implies
$$
\begin{array}{l}
\displaystyle\sum_{t=1}^n r_t 
= \displaystyle \sum_{t=1}^n 2\beta_t^{1/2}\sum_{\mathcal{I}\in\mathcal{U}} \sigma_{t-1}^\mathcal{I}(\mathbf{x}_t^\mathcal{I}) + \sum_{t=1}^n \frac{1}{t^2} \\
\leq \displaystyle\sqrt{Cn\beta_n\gamma_n} + \sum_{t=1}^n \frac{1}{t^2} \\
\displaystyle\leq \sqrt{Cn\beta_n\gamma_n} + \frac{\pi^2}{6} 
\end{array}
$$
where the last inequality follows from $\sum_{t=1}^n(1/t^2) \leq \sum_{t=1}^{\infty}(1/t^2) = \pi^2/6$. On the other hand, we have already established in Appendix~\ref{app:b} that $\lim_{n\rightarrow\infty}\sqrt{Cn\beta_n\gamma_n}/n = 0$, which means $\sqrt{Cn\beta_n\gamma_n} \in o(n)$. Since $\pi^2/6$ is a constant, $R_n = \sum_{t=1}^n r_t \leq \sqrt{Cn\beta_n\gamma_n} + \pi^2/6 \in o(n)$. 
%
\section{Synthetic Functions}
\label{app:d}
The \emph{Shekel} function is a $4$-dimensional function over the hypercube $[0, 10]^4$ given by $f(\mathbf{x}) \triangleq -\sum_{i=1}^{10}(\beta_i + \sum_{j=1}^4(x_j - C_{ij})^2)^{-1}$ where $\mathbf{x}\triangleq (x_j)_{j=1,\ldots,4}$, and 
$(\beta_i)_{i=1,\ldots,10}$ and $(C_{ij})_{i=1,\ldots,10,j=1,\ldots,4}$ are given as constants. 
It has one global minimum $f(\mathbf{x^\ast}) = -10.5364$. 
The \emph{Hartmann} function is a $6$-dimensional function over the hypercube $[0, 1]^6$ given by $f(\mathbf{x}) \triangleq - \sum_{i=1}^{4}\alpha_i\exp(-\sum_{j=1}^{6}A_{ij}(x_j - P_{ij})^2)$ where $\mathbf{x}\triangleq (x_j)_{j=1,\ldots,6}$, and
$(\alpha_i)_{i=1,\ldots,4}$, $(A_{ij})_{i=1,\ldots,4,j=1,\ldots,6}$, and $(P_{ij})_{i=1,\ldots,4,j=1,\ldots,6}$
are given as constants. It has one global minimum $f(\mathbf{x^\ast}) = -3.32237$. 
Lastly, the \emph{Michalewicz} function is a $10$-dimensional function over the hypercube $[0, \pi]^{10}$ given by $f(\mathbf{x}) \triangleq -\sum_{i=1}^{10}\sin(x_i)\ \sin^{2m}(ix^2_i/\pi)$
where $\mathbf{x}\triangleq (x_i)_{i=1,\ldots,10}$. It has one global minimum $f(\mathbf{x}^\ast) = -9.66015$.
\fi

\end{document}